\title{FIT: A Metric for Model Sensitivity}
\author{Ben Zandonati\\
University of Cambridge\\
\texttt{baz23@cam.ac.uk} \\
\And
Adrian Alan Pol \\
Princeton University \\
\texttt{ap6964@princeton.edu} \\
\And
Maurizio Pierini \\
CERN \\
\texttt{maurizio.pierini@cern.ch} \\
\And
Olya Sirkin \\
CEVA Inc. \\
\texttt{sirkinolya@gmail.com} \\
\And
Tal Kopetz \\
CEVA Inc. \\
\texttt{tal.kopetz@ceva-dsp.com}
}
\begin{document}

\maketitle

\begin{abstract}

Model compression is vital to the deployment of deep learning on edge devices. Low precision representations, achieved via quantization of weights and activations, can reduce inference time and memory requirements. However, quantifying and predicting the response of a model to the changes associated with this procedure remains challenging. This response is non-linear and heterogeneous throughout the network. Understanding which groups of parameters and activations are more sensitive to quantization than others is a critical stage in maximizing efficiency. For this purpose, we propose FIT. Motivated by an information geometric perspective, FIT combines the Fisher information with a model of quantization. We find that FIT can estimate the final performance of a network without retraining. FIT effectively fuses contributions from both parameter and activation quantization into a single metric. Additionally, FIT is fast to compute when compared to existing methods, demonstrating favourable convergence properties. These properties are validated experimentally across hundreds of quantization configurations, with a focus on layer-wise mixed-precision quantization.

\end{abstract}

\section{Introduction}
\label{sec:intro}

The computational costs and memory footprints associated with deep neural networks (DNN) hamper their deployment to resource-constrained environments like mobile devices~\citep{ignatov2018ai}, self-driving cars~\citep{liu2019edge}, or high-energy physics experiments~\citep{coelho2021automatic}. Latency, storage and even environmental limitations directly conflict with the current machine learning regime of performance improvement through scale. For deep learning practitioners, adhering to these strict requirements, whilst implementing state-of-the-art solutions, is a constant challenge. As a result, compression methods, such as quantization \citep{Quantization_1998} and pruning \citep{Pruning_versus_clipping_in_neural_networks_1989}, have become essential stages in deployment on the edge.

In this paper, we focus on quantization. Quantization refers to the use of lower-precision representations for values within the network, like weights, activations and even gradients. This could, for example, involve reducing the values stored in 32-bit floating point (FP) single precision, to \textsc{INT-8/4/2} integer precision (IP). This reduces the memory requirements whilst allowing models to meet strict latency and energy consumption criteria on high-performance hardware such as FPGAs. 

Despite these benefits, there is a trade-off associated with quantization. As full precision computation is approximated with less precise representations, the model often incurs a drop in performance. In practice, this trade-off is worthwhile for resource-constrained applications. However, the DNN performance degradation associated with quantization can become unacceptable under aggressive schemes, where post-training quantization (PTQ) to 8 bits and below is applied to the whole network \citep{jacob2018quantization}. Quantization Aware Training (QAT) \citep{jacob2018quantization} is often used to recover lost performance. However, even after QAT, aggressive quantization may still result in a large performance drop. The model performance is limited by sub-optimal quantization schemes. 

It is known that different layers, within different architectures, respond differently to quantization. Akin to how more detailed regions of images are more challenging to compress, as are certain groups of parameters. As is shown clearly by \cite{Mixed_Precision_Quantization_of_ConvNets_via_Differentiable_Neural_Architecture_Search_2018}, uniform bit-width schemes fail to capture this heterogeneity. Mixed-precision quantization (MPQ), where each layer within the network is assigned a different precision, allows us to push the performance-compression trade-off to the limit. However, determining which bit widths to assign to each layer is non-trivial. Furthermore, the search space of possible quantization configurations is exponential in the number of layers and activations. 

Existing methods employ techniques such as neural architecture search and deep reinforcement learning, which are computationally expensive and less general. Methods which aim to explicitly capture the \textit{sensitivity} (or \textit{importance}) of layers within the network, present improved performance and reduced complexity. In particular, previous works employ the Hessian, taking the loss landscape curvature as sensitivity and achieving state-of-the-art compression. Even so, many explicit methods are slow to compute, grounded in intuition, and fail to include activation quantization. Furthermore, previous works determine performance based on only a handful of configurations. Further elaboration is presented in Section \ref{sec:bg}. 

The Fisher Information and the Hessian are closely related. In particular, many previous works in optimisation present the Fisher Information as an alternative to the Hessian. In this paper, we use the \textbf{F}isher \textbf{I}nformation \textbf{T}race as a means of capturing the network dynamics. We obtain our final FIT metric which includes a quantization model, through a general proof in Section \ref{sec:method} grounded within the field of information geometry. The layer-wise form of FIT closely resembles that of Hessian Aware Quantization (HAWQ), presented by \cite{HAWQ-V2:_Hessian_Aware_trace-Weighted_Quantization_of_Neural_Networks_2019}. 
Our contributions in this work are as follows:
\begin{enumerate}
    \item We introduce the Fisher Information Trace (FIT) metric, to determine the effects of quantization. To the best of our knowledge, this is the first application of the Fisher Information to generate MPQ configurations and predict final model performance. We show that FIT demonstrates improved convergence properties, is faster to compute than alternative metrics, and can be used to predict final model performance after quantization.
    \item The sensitivity of parameters and activations to quantization is combined within FIT as a single metric. We show that this consistently improves performance.
    \item We introduce a rank correlation evaluation procedure for mixed-precision quantization, which yields more significant results with which to inform practitioners.
\end{enumerate}

\section{Previous Work}
\label{sec:bg}

In this section, we primarily focus on mixed-precision quantization (MPQ), and also give context to the information geometric perspective and the Hessian.

\textbf{Mixed Precision Quantization}~~~~As noted in Section \ref{sec:intro}, the search space of possible quantization configurations, i.e. bit setting for each layer and/or activation, is exponential in the number of layers: $\mathcal{O}(|\mathcal{B}|^{2L})$, where $\mathcal{B}$ is the set of bit precisions and $L$ the layers. Tackling this large search space has proved challenging, however recent works have made headway in improving the state-of-the-art. 

CW-HAWQ \citep{Channel-wise_Hessian_Aware_trace-Weighted_Quantization_of_Neural_Networks_2020}, AutoQ \citep{AutoQ:_Automated_Kernel-Wise_Neural_Network_Quantization_2020} and HAQ \citep{HAQ:_Hardware-Aware_Automated_Quantization_with_Mixed_Precision_2019} deploy Deep Reinforcement Learning (DRL) to automatically determine the required quantization configuration, given a set of constraints (e.g. accuracy, latency or size). AutoQ improves upon HAQ by employing a hierarchical agent with a hardware-aware objective function. CW-HAWQ seeks further improvements by reducing the search space with explicit second-order information, as outlined by \cite{HAWQ-V2:_Hessian_Aware_trace-Weighted_Quantization_of_Neural_Networks_2019}. The search space is also often explored using Neural Architecture Search (NAS). For instance, \cite{Mixed_Precision_Quantization_of_ConvNets_via_Differentiable_Neural_Architecture_Search_2018} obtain 10-20× model compression with little to no accuracy degradation. Unfortunately, both the DRL and NAS approaches suffer from large computational resource requirements. As a result, evaluation is only possible on a small number of configurations. These methods explore the search space of possible model configurations, without explicitly capturing the dynamics of the network. Instead, this is learned implicitly, which restricts generalisation. 

More recent works have successfully reduced the search space of model configurations through explicit methods, which capture the relative \textit{sensitivity} of layers to quantization. The bit-width assignment is based on this sensitivity. The eigenvalues of the Hessian matrix yield an analogous heuristic to the local curvature. Higher local curvature indicates higher sensitivities to parameter perturbation, as would result from quantization to a lower bit precision. This is exploited by \cite{Towards_the_Limit_of_Network_Quantization_2017} to inform bit-precision configurations. Popularised by \cite{HAWQ-V2:_Hessian_Aware_trace-Weighted_Quantization_of_Neural_Networks_2019}, HAWQ presents the following perturbation metric, where a trace-based method is combined with a measure of quantization error: $$\sum^{L}_{l=1} \bar{Tr}(H_{l})\cdot||Q(\theta_{l}) - \theta_{l}||^{2}.$$ Here, $\theta_{l}$ and $Q(\theta_{l})$ represent the full precision and  model parameters respectively, for each block $l$ of the model. $\bar{Tr}(H_{l})$ denotes the parameter normalised Hessian trace. HAWQ-V1 \citep{HAWQ:_Hessian_AWare_Quantization_of_Neural_Networks_with_Mixed-Precision_2019} used the top eigenvalue as an estimate of block sensitivity. However, this proved less effective than using the trace as in HAWQ-V2 \cite{HAWQ-V2:_Hessian_Aware_trace-Weighted_Quantization_of_Neural_Networks_2019}. The ordering of quantization depth established over the set of network blocks reduces the search space of possible model configurations. The Pareto front associated with the trade-off between sensitivity and size is used to quickly determine the best MPQ configuration for a given set of constraints. HAWQ-V3 \citep{HAWQV3:_Dyadic_Neural_Network_Quantization_2021} involves integer linear programming to determine the quantization configuration. Although \cite{HAWQ-V2:_Hessian_Aware_trace-Weighted_Quantization_of_Neural_Networks_2019} discusses activation quantization, it is considered separately from weight quantization, which is insufficient for prediction, and more challenging for practitioners to implement. In addition, trace computation can become very expensive for large networks. This is especially the case for activation traces, which require large amounts of GPU memory. Furthermore, only a handful of configurations are analysed. 

Other effective heuristics have been proposed, such as batch normalisation $\gamma$ \citep{BN-NAS:_Neural_Architecture_Search_with_Batch_Normalization_2021} and quantization \citep{Mixed-Precision_Neural_Network_Quantization_via_Learned_Layer-wise_Importance_2022} scaling parameters. For these intuition-grounded heuristics, it is more challenging to assess their generality. More complex methods of obtaining MPQ configurations exist. \cite{BMPQ:_Bit-Gradient_Sensitivity_Driven_Mixed-Precision_Quantization_of_DNNs_from_Scratch_2021} employ straight-through estimators of the gradient \citep{hubara2016binarized} with respect to the bit-setting parameter. Adjacent work closes the gap between final accuracy and quantization configuration. In \cite{Layer_Importance_Estimation_with_Imprinting_for_Neural_Network_Quantization_2021}, a classifier after each layer is used to estimate the contribution to accuracy.

% Previous works within the field of quantization, have used the Fisher Information for adjacent tasks as a regularisation method to reduce quantization error \citep{Comparing_Fisher_Information_Regularization_with_Distillation_for_DNN_Quantization_2020}, as well as a method for computing importance rankings for blocks/parameters \citep{Reducing_the_Model_Order_of_Deep_Neural_Networks_Using_Information_Theory_2016}. In addition, \cite{brecq} use the Fisher Information as part of a block/layer-wise reconstruction loss during post-training quantization.

Previous works within the field of quantization have used the Fisher Information for adjacent tasks. \cite{Comparing_Fisher_Information_Regularization_with_Distillation_for_DNN_Quantization_2020}, use it as a regularisation method to reduce quantization error, whilst it is employed by \cite{Reducing_the_Model_Order_of_Deep_Neural_Networks_Using_Information_Theory_2016} as a method for computing importance rankings for blocks/parameters. In addition, \cite{brecq} use the Fisher Information as part of a block/layer-wise reconstruction loss during post-training quantization.

\textbf{Connections to the loss landscape perspective}~~~~Gradient preconditioning using the Fisher Information Metric (FIM) - the Natural Gradient - acts to normalise the distortion of the loss landscape. This information is extracted via the eigenvalues which are characterised by the trace. This is closely related to the Hessian matrix (and Netwon’s method). The two coincide, provided that the model has converged to the global minimum $\theta^{*}$, and specific regularity conditions are upheld \citep{Information_geometry_and_its_applications_2016} (see Appendix \ref{section: connections}). The Hessian matrix $H$ is commonly derived via the second-order expansion of the loss function at a minimum. Using it as a method for determining the effects of perturbations is common. The FIM is more general, as it is yielded (infinitesimally) from the invariant f-divergences. As a result, FIT applies to a greater subset of models, even those which have not converged to critical points.

Many previous works have analysed and provided examples for, the relation between second-order information, and the behaviour of the Fisher Information. \citep{Limitations_of_the_Empirical_Fisher_Approximation_for_Natural_Gradient_Descent_2020, Improving_the_Convergence_of_Back-Propagation_Learning_with_Second-Order_Methods_1989, New_insights_and_perspectives_on_the_natural_gradient_method_2020, Hessian_based_analysis_of_SGD_for_Deep_Nets:_Dynamics_and_Generalization_2019}. This has primarily been with reference to natural gradient descent \citep{Natural_Gradient_Works_Efficiently_in_Learning_1998}, preconditioning matrices, and Newton's method during optimization. Recent works serve to highlight the success of the layer-wise scaling factors associated with the Adam optimiser~\citep{Adam:_A_Method_for_Stochastic_Optimization_2014} whilst moving in stochastic gradient descent (SGD) directions~\citep{Disentangling_Adaptive_Gradient_Methods_from_Learning_Rates_2020, Learning_Rate_Grafting:_Transferability_of_Optimizer_Tuning_2021}. This is consistent with previous work \citep{Limitations_of_the_Empirical_Fisher_Approximation_for_Natural_Gradient_Descent_2020}, which highlights the issues associated with sub-optimal scaling for SGD, and erroneous directions for Empirical Fisher (EF)-based preconditioning. The EF and its properties are also explored by \cite{Universal_Statistics_of_Fisher_Information_in_Deep_Neural_Networks:_Mean_Field_Approach_2019}.

It is important to make the distinction that FIT provides a different use case. Rather than focusing on optimisation, FIT is used to quantify the effects of small parameter movements away from the full precision model, as would arise during quantization. Additionally, the Fisher-Rao metric has been previously suggested as a measure of network capacity \citep{Fisher-Rao_Metric_Geometry_and_Complexity_of_Neural_Networks_2017}. In this work, we consider taking the expectation over this quantity. From this perspective, FIT denotes expected changes in network capacity as a result of quantization. 

\section{Method}\label{sec:method}

First, we outline preliminary notation. We then introduce an information geometric perspective to quantify the effects of model perturbation. FIT is reached via weak assumptions regarding quantization. We then discuss computational details and finally connect FIT to the loss landscape perspective.

\subsection{Preliminary notation.}

Consider training a parameterised model as an estimate of the underlying conditional distribution $p(y|x, \theta)$, where the empirical loss on the training set is given by: $\mathcal{L}(\theta) = \frac{1}{N}\sum_{i=1}^{N}f(x_{i}, y_{i}, \theta)$. In this case, $\theta \in \mathbb{R}^{w}$ are the model parameters, and $f(x_{i}, y_{i}, \theta)$ is the loss with respect to a single member $z_{i} = (x_{i}, y_{i}) \in \mathbb{R}^{d}\times\mathcal{Y}$ of the training dataset $\mathrm{D} = \{ (x_{i}, y_{i}) \}_{i=1}^{N}$, drawn i.i.d. from the true distribution. Consider, for example, the cross-entropy criterion for training given by: $f(x_{i}, y_{i}, \theta) = -\log p(y_{i} | x_{i}, \theta)$. Note that in parts we follow signal processing convention and refer to the parameter movement associated with a change in precision as \textit{quantization noise}.

\subsection{Fisher Information Trace (FIT) Metric}

Consider a general perturbation to the model parameters: $p(y|x, \theta + \delta \theta)$. For brevity, we denote $p(y|x, \phi)$ as $p_{\phi}$. To measure the effect of this perturbation on the model, we use an $f$-divergence (e.g. KL, total variation or $\chi^{2}$) between them: $D_{f}(p_{\theta}||p_{\theta+\delta\theta})$. It is a well-known concept in information geometry \citep{Information_geometry_and_its_applications_2016,An_elementary_introduction_to_information_geometry_2018} that the FIM arises from such a divergence: $D_{KL}(p_{\theta}||p_{\theta+\delta\theta}) = \frac{1}{2} \delta\theta^{T} I(\theta) \delta\theta$. The FIM, $I(\theta)$, takes the following form:

\begin{equation*}
    I(\theta) = \mathbb{E}_{p_{\theta}(x,y)} [\nabla_{\theta}\log p(y | x, \theta)\nabla_{\theta}\log p(y | x, \theta)^{T}]
\end{equation*}

In this case, $p_{\theta}(x,y)$ denotes the fact that expectation is taken over the joint distribution of $x$ and $y$. The Rao distance \citep{raodistance1981} between two distributions then follows. 

The exact (per parameter) perturbations $\delta\theta$ associated with quantization are often unknown. As such, we assume they are drawn from an underlying quantization noise distribution, and obtain an expectation of this quadratic differential:

\begin{equation*}
    \mathbb{E}\left[\delta\theta^{T} I(\theta) \delta\theta\right] = \mathbb{E}[\delta\theta]^{T} I(\theta) \mathbb{E}[\delta\theta] + Tr(I(\theta)Cov[\delta\theta])
\end{equation*}

We assume that the random noise associated with quantization is symmetrically distributed around mean of zero: $\mathbb{E}[\delta\theta] = 0$, and uncorrelated: $Cov[\delta\theta] = Diag(\mathbb{E}[\delta\theta^2])$. This yields the FIT heuristic in a general form:

\begin{equation*}
    \Omega = Tr\left(I(\theta)diag(\mathbb{E}[\delta\theta^2])\right) \\
\end{equation*}

To develop this further, we assume that parameters within a single layer or block will have the same noise power, as this will be highly dependent on block-based configuration factors. As a concrete example, take the quantization noise power associated with heterogeneous quantization across different layers, which is primarily dependent on layer-wise bit-precision. As such, we can rephrase FIT in a layer-wise form: 
\begin{equation*}
    \sum_{l}^{L} Tr(I(\theta_{l})) \cdot \mathbb{E}[\delta\theta^2]_{l}
\end{equation*}
Where $l$ denotes a single layer/block in a set of $L$ model layers/blocks.

\subsubsection{Extending the neural manifold}

The previous analysis applies well to network parameters, however, quantization is also applied to the activations themselves. As such, to determine the effects of the activation noise associated with quantization, we must extend the notion of \textit{neural manifolds} to also include activation statistics, $\hat{a}$. The general perturbed model is denoted as follows: $p(y|x, \theta + \delta \theta, \hat{a} + \delta \hat{a})$.  The primary practical change required for this extension involves taking derivatives w.r.t. activations rather than parameters - a feature which is well supported in deep learning frameworks (see Appendix \ref{section: activation_traces} for activation trace examples). Having considered the quantization of activations and weights within the same space, these can now be meaningfully combined in the FIT heuristic. This is illustrated in Section~\ref{sec:experiments}.

\subsection{Computational details}

The empirical form of FIT can be obtained via approximations of $\mathbb{E}[\delta\theta^2]_{l}$ and $Tr(I(\theta_{l}))$. The former requires either a Monte-Carlo estimate, or an approximate noise model (see Appendix \ref{section: quantization_noise}), whilst the latter requires computing the Empirical Fisher (EF): $\hat{I}(\theta)$. This yields the following form:
\begin{equation*}
    \sum_{l}^{L} \frac{1}{n(l)} \cdot Tr(\hat{I}(\theta_{l})) \cdot ||\delta\theta||_{l}^{2}~.
\end{equation*}

\cite{Limitations_of_the_Empirical_Fisher_Approximation_for_Natural_Gradient_Descent_2020} illustrates the limitations, as well as the notational inconsistency, of the EF, highlighting several failure cases when using the EF during optimization. Recall that the FIM involves computing expectation over the joint distribution of $x$ and $y$. We do not have access to this distribution, in fact, we only model the conditional distribution. This leads naturally to the notion of the EF:
\begin{equation*}
    \hat{I}(\theta) = \frac{1}{N}\sum_{i=1}^{N} \nabla_{\theta}f_{\theta}(z_{i})\nabla_{\theta}f_{\theta}(z_{i})^{T}~.
\end{equation*}
The EF trace is far less computationally intensive than the Hessian matrix. Previous methods \citep{HAWQ-V2:_Hessian_Aware_trace-Weighted_Quantization_of_Neural_Networks_2019,HAWQV3:_Dyadic_Neural_Network_Quantization_2021, Hessian-Aware_Pruning_and_Optimal_Neural_Implant_2021} suggested the use of the Hutchinson algorithm  \citep{A_stochastic_estimator_of_the_trace_of_the_influence_matrix_for_Laplacian_smoothing_splines_1989}. The trace of the Hessian is extracted in a matrix-free manner via zero-mean random variables with a variance of one: $Tr(H) \approx \frac{1}{m}\sum^{m}_{i = 1}r_i^{T}Hr_{i}$, where $m$ is the number of estimator iterations. It is common to use Rademacher random variables $r_i \in \{-1,1\}$. First, the above method requires a second backwards pass through the network for every iteration. This is costly, especially for DNNs with many layers, where the increased memory requirements associated with storing the computation graph become prohibitive. Second, the variance of each estimator can be large. This is given by: $\mathbb{V}[r_i^{T}Hr_{i}] = 2\left( ||H||_{F}^{2} - \sum_{i}H_{ii}^{2} \right)$ (see Appendix \ref{section: computational_proofs}). Even for the Hessian, which has a high diagonal norm, this variance can still be large. This is validated empirically in Section~\ref{sec:experiments}. In contrast, the EF admits a simpler form (see Appendix \ref{section: computational_proofs}):
\begin{equation*}
    Tr[\hat{I}(\theta)] = {\frac{1}{N}\sum_{i=1}^{N} ||\nabla f(z_{i}, \theta)||^{2}}~.
\end{equation*}
The convergence of this trace estimator improves upon that of the Hutchinson estimator in having lower variance. Additionally, the computation is faster and better supported by deep learning frameworks: its computation can be performed with a single network pass, as no second derivative is required. A similar scheme is used in the Adam optimizer~\citep{Adam:_A_Method_for_Stochastic_Optimization_2014}, where an exponential moving average is used. Importantly, the EF trace estimation has a more model-agnostic variance, also validated in Section \ref{sec:experiments}.

\section{Experiments}\label{sec:experiments}

We perform several experiments to determine the performance of FIT. First, we examine the properties of FIT, and compare it with the commonly used Hessian. We show that FIT preserves the relative block sensitivities of the Hessian, whilst having significantly favourable convergence properties. Second, we illustrate the predictive performance of FIT, in comparison to other sensitivity metrics. We then show the generality of FIT, by analysing the performance on a semantic segmentation task. Finally, we conclude by discussing practical implications.

\subsection{Comparison with the Hessian}

To evaluate trace performance and convergence in comparison to Hessian-based methods, we consider several computer vision architectures, trained on the ImageNet~\citep{deng2009imagenet} dataset. We first illustrate the similarity between the EF trace and the Hessian trace and then illustrate the favourable convergence properties of the EF.

\paragraph{Trace Similarity}

Figure~\ref{fig:traces} shows that the EF preserves the relative block sensitivity of the Hessian. Substituting the EF trace for the Hessian will not affect the performance of a heuristic-based search algorithm. Additionally, even for the Inception-V3 trace in Figure~\ref{fig:traces}(d), the scaling discrepancy would present no change in the final generated model configurations because heuristic methods (which search for a minimum) are scale agnostic. 

\newcommand\hlfour{2.6}
\begin{figure}[h]
    \centering
    \subfloat[\centering ResNet-18]{{\includegraphics[height=\hlfour cm]{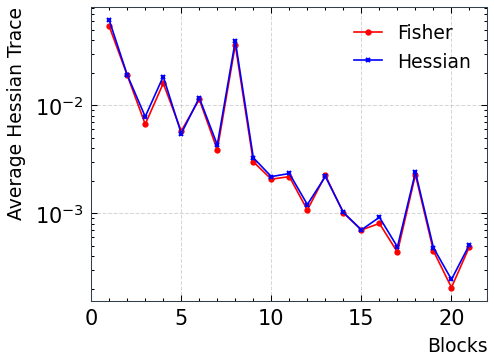} }}%
    \subfloat[\centering ResNet-50]{{\includegraphics[height=\hlfour cm]{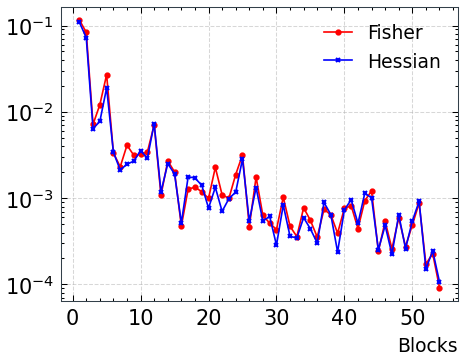} }}%
    \subfloat[\centering MobileNet-V2]{{\includegraphics[height=\hlfour cm]{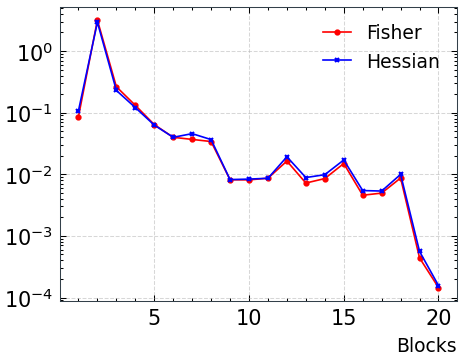}}}
    \subfloat[\centering Inception-V3]{{\includegraphics[height=\hlfour cm]{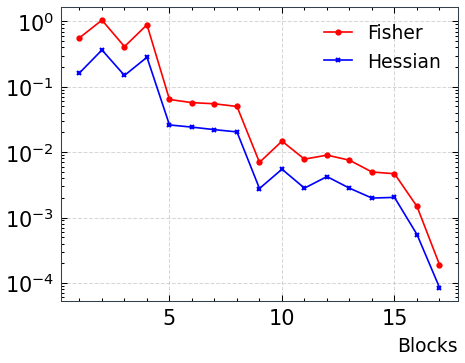} }}
    
    \caption{Hessian and EF Parameter traces for four classification models. The Hessian and EF traces for the parameters are very similar. For Inception-V3, this holds up to a constant scaling factor.}%
    \label{fig:traces}%
\end{figure}

\paragraph{Convergence Rate}

Across all models, the variance associated with the EF trace is orders of magnitude lower than that of the Hessian trace, as shown in Table~\ref{table:speed}. This is made clear in Figure~\ref{fig:scaling-convergence}, where the EF trace stabilises in far fewer iterations than the Hessian. The analysis in Section~\ref{sec:method}, where we suggest that the EF estimation process has lower variance and converges faster, holds well in practice. 

Importantly, whilst the Hessian variance shown in Table~\ref{table:speed} is very model dependent, the EF trace estimator variance is more consistent across all models. These results also hold across differing batch sizes (see Appendix \ref{section:estim_comp}).The results of this favourable convergence are illustrated in Table~\ref{table:speed}. For fixed tolerances, the model agnostic behaviour, faster computation, and lower variance, all contribute to a large speedup. 

\begin{table}[h]
\centering
\footnotesize

\begin{tabular}{lccccc}
\hline
             & \multicolumn{2}{c}{\textbf{Estimator Variance}} & \multicolumn{2}{c}{\textbf{Iteration Time (ms)}} & \textbf{Relative Speedup} \\ 
             & \textbf{EF}               & Hessian             & \textbf{EF}            & Hessian            &                           \\ [0.5ex] \hline\hline
ResNet-18    & \textbf{0.15} $\pm$ 0.03           & 1.09 $\pm$ 0.02               & \textbf{47.78} $\pm$ 0.03                  & 186.54 $\pm$ 0.56             & \textbf{27.67}  $\pm$ 5.40                \\
ResNet-50    & \textbf{0.31} $\pm$ 0.04           & 6.91 $\pm$ 1.52              & \textbf{152.02} $\pm$ 0.38                  & 639.13 $\pm$ 1.02             & \textbf{94.24} $\pm$ 34.06                   \\
MobileNet-V2 & \textbf{0.24} $\pm$ 0.01           & 4.81 $\pm$ 0.38              & \textbf{58.84} $\pm$ 0.55                  & 2573.50 $\pm$ 3.06              & \textbf{894.24} $\pm$ 121.25                  \\
Inception-V3 & \textbf{0.43} $\pm$ 0.03           & 13.62 $\pm$ 0.46             & \textbf{235.43} $\pm$ 0.21                  & 905.04 $\pm$ 4.69             & \textbf{122.06} $\pm$ 14.90                 \\ \hline
\end{tabular}
\caption{Representative examples of the typical speedup associated with using the EF over the Hessian. Iteration times and variances are computed as sample statistics over multiple runs of many iterations, with batch size of 32. The resulting speedup is denoted for a fixed tolerance, which can be practically computed via a moving variation of the mean trace. Early stopping is possible when we first reach the desired tolerance. The measurements were performed on an NVidia 2080Ti GPU.}
\label{table:speed}
\end{table}

\begin{figure}[h]
    \newcommand\scalingh{0.34}
    \centering
    \subfloat[\centering ResNet-18 ]{{\includegraphics[height=\scalingh \linewidth]{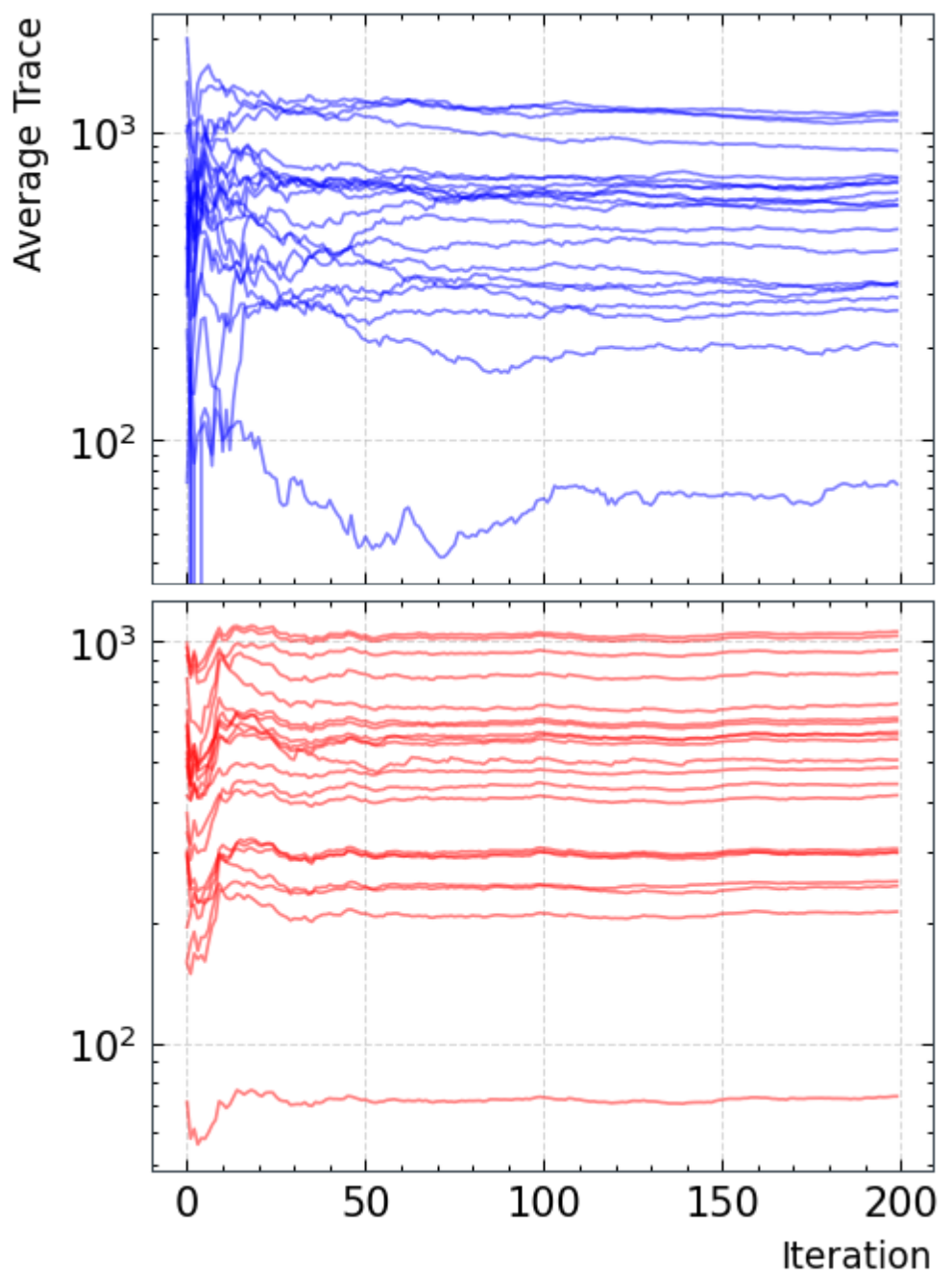} }}%
    % \quad
    \subfloat[\centering ResNet-50 ]{{\includegraphics[height=\scalingh \linewidth]{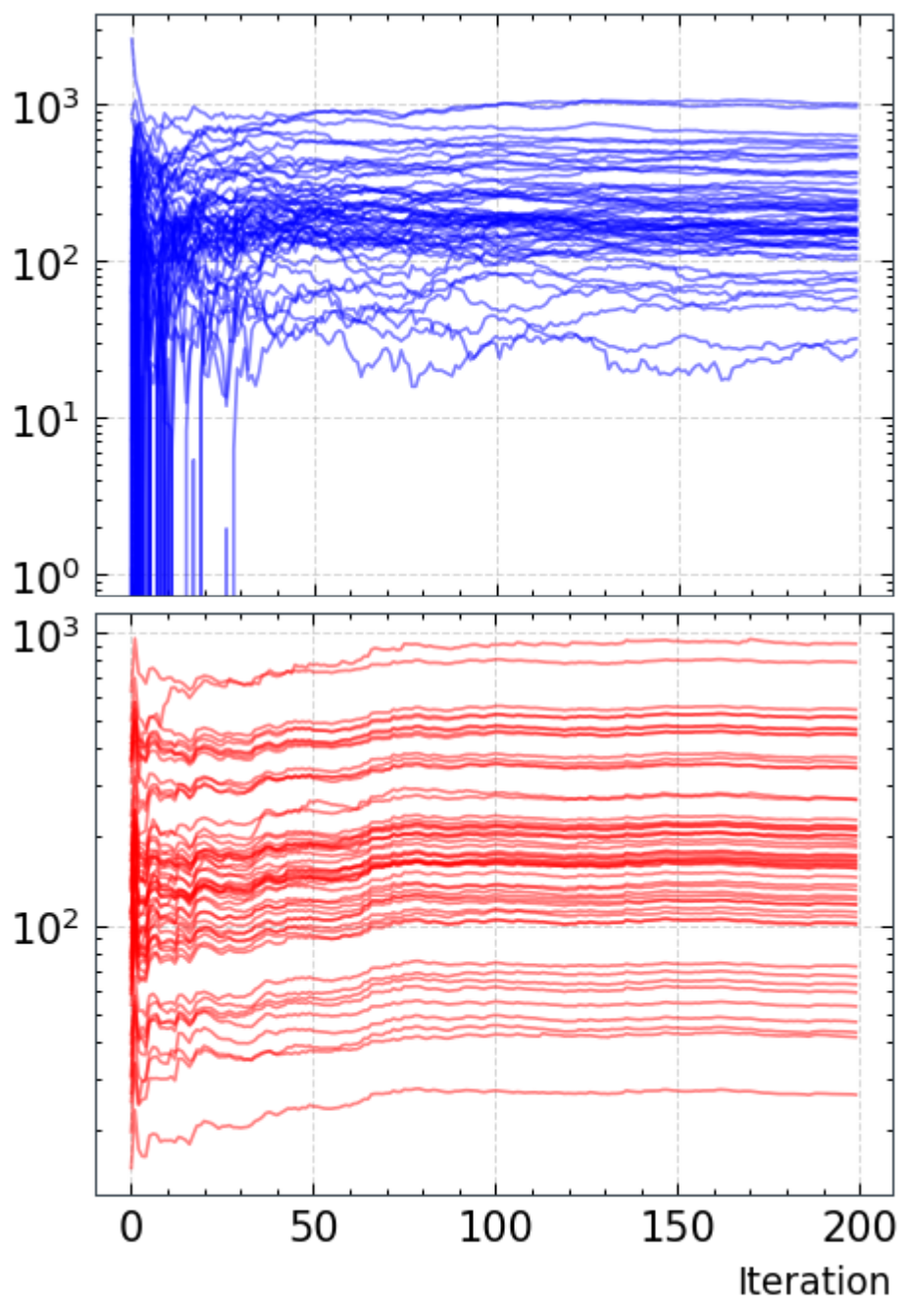} }}%
    % \quad
    \subfloat[\centering MobileNet-V2 ]{{\includegraphics[height=\scalingh \linewidth]{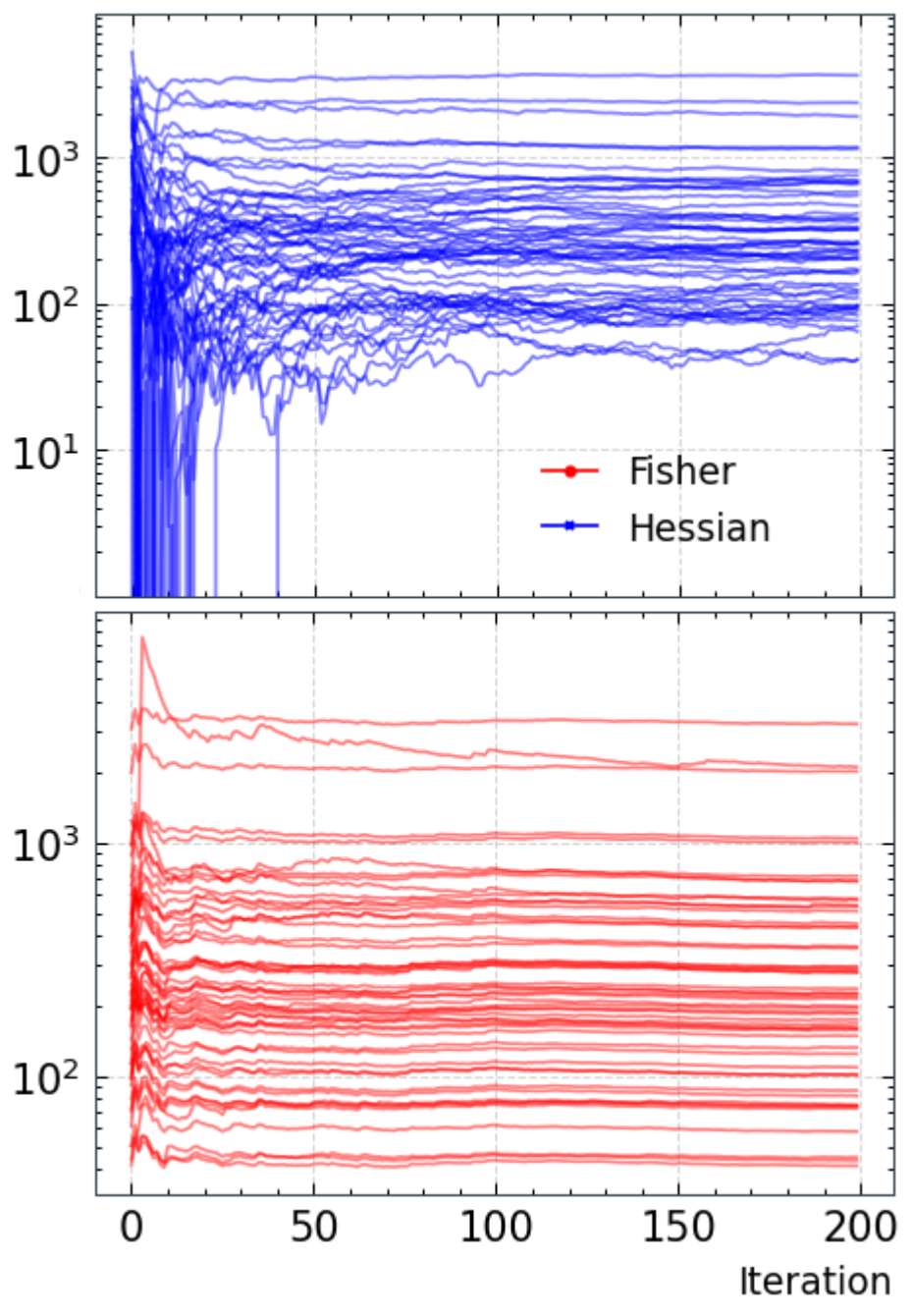} }}%
    % \quad
    \subfloat[\centering Inception-V3 ]{{\includegraphics[height=\scalingh \linewidth]{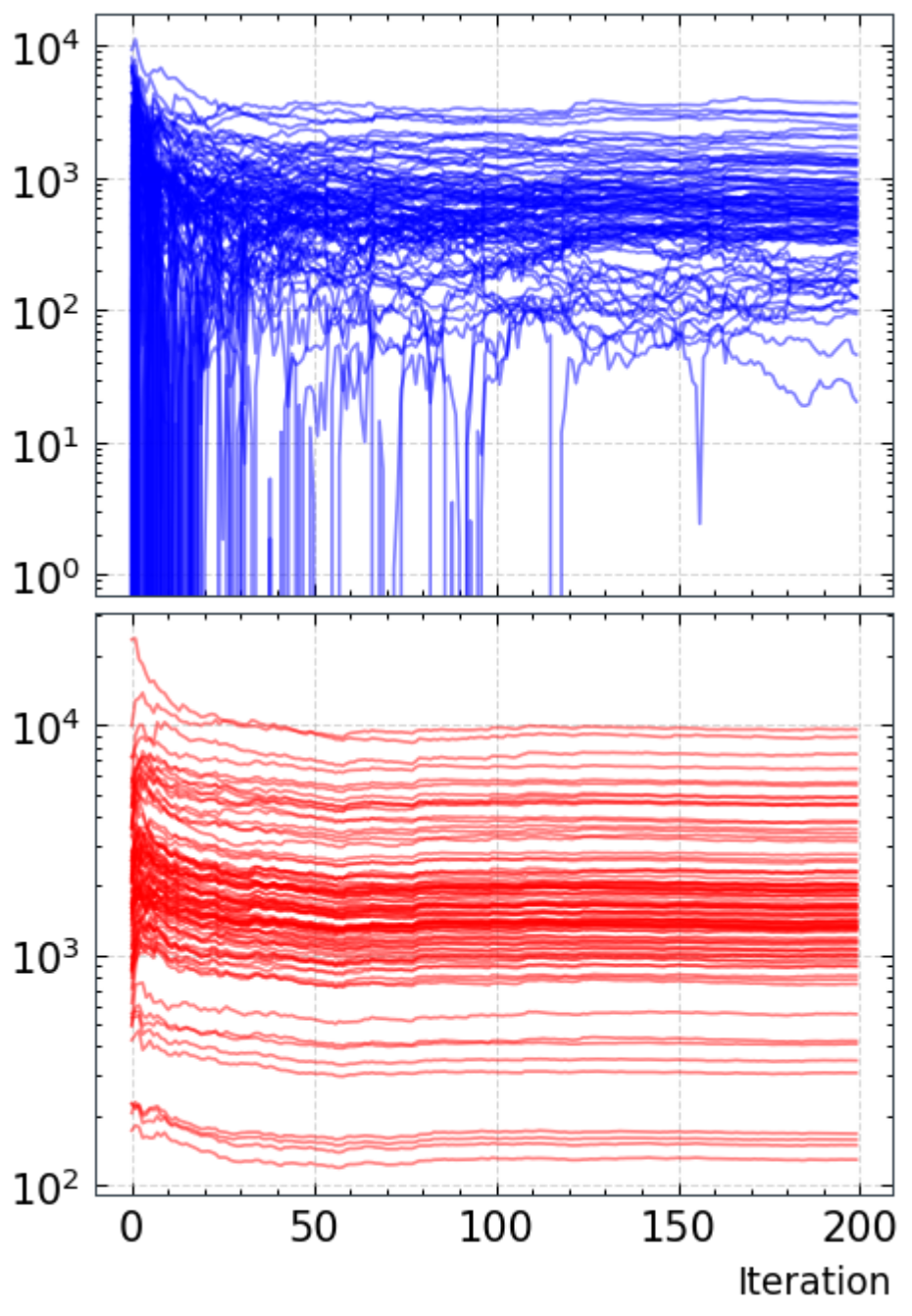}}}
    \quad
    \caption{Comparison between Hessian and EF trace convergence for four classification models.}%
    \label{fig:scaling-convergence}%
\end{figure}

\subsection{From FIT to Accuracy}

In this section, we use correlation as a novel evaluation criterion for sensitivity metrics, used to inform quantization. Strong correlation implies that the metric is indicative of final performance, whilst low correlation implies that the metric is less informative for MPQ configuration generation. In this MPQ setting, FIT is calculated as follows (Appendix \ref{section: quantization_noise}):
\begin{equation*}
    \sum_{l}^{L} Tr(\hat{I}(\theta_{l})) \cdot  \left[\frac{\theta_{max} - \theta_{min}}{2^{b_{l}}-1}\right]^2
\end{equation*}
Each bit configuration - $\{b_{l}\}_{1}^{L}$ - yields a unique FIT value from which we can estimate final performance. The space of possible models is large, as outlined in Section~\ref{sec:bg}, thus we randomly sample configuration space. 100 CNN models, with and without batch-normalisation, are trained on the Cifar-10 and Mnist datasets, giving a total of 4 studies. 
% \newpage
\paragraph{Comparison Metrics}

As noted in Section \ref{sec:bg} previous works \citep{BMPQ:_Bit-Gradient_Sensitivity_Driven_Mixed-Precision_Quantization_of_DNNs_from_Scratch_2021,Layer_Importance_Estimation_with_Imprinting_for_Neural_Network_Quantization_2021, Mixed-Precision_Neural_Network_Quantization_via_Learned_Layer-wise_Importance_2022} provide layer-wise heuristics with which to generate quantization configurations. This differs from FIT, which yields a single value. These previous methods are not directly comparable. However, we take inspiration from~\cite{ BN-NAS:_Neural_Architecture_Search_with_Batch_Normalization_2021,Mixed-Precision_Neural_Network_Quantization_via_Learned_Layer-wise_Importance_2022}, using the quantization ranges~(QR: $|\theta_{max} - \theta_{min}|$) as well as the batch normalisation scaling parameter~(BN: $\gamma$)~\citep{ioffe2015batch}, to evaluate the efficacy of FIT in characterising sensitivity. 
% Rather than using relative layer sensitivity values to produce configurations, FIT indicates final performance, from which we infer better configurations. Furthermore, elaborate training schemes are required to generate these importance values, whereas FIT requires only the trained, full-precision network. 

In addition to these, we also perform ablation studies by removing components of FIT, resulting in FIT$_\mathrm{W}$, FIT$_\mathrm{A}$, and the isolated quantization noise model, i.e. $\mathbb{E}[\delta\theta^2]$. Note that we do not include HAWQ here as it generates results equivalent to FIT$_\mathrm{W}$. Equations for these comparison heuristics are shown in Appendix~\ref{section:further_exp_details}. This decomposition helps determine how much the components which comprise FIT contribute to its overall performance. 

% Figure~\ref{fig:predictive_plots} and Table~\ref{table:predictive_correlations} show the plots and rank-correlation results across all datasets and metrics. In particular, FIT consistently correlates well with final model performance. Whilst QR achieves the best results for dataset C, it does not scale beyond it, especially for the models that do not employ batch normalisation.

\begin{table}[h]
\centering
\footnotesize
\begin{tabular}{ccccccccccc}
\hline
\textbf{Experiment} & \textbf{Dataset} & \textbf{BN}   & \textbf{FIT}  & QR & Noise & FIT$_\mathrm{W}$ & QR$_\mathrm{W}$ & FIT$_\mathrm{A}$ & QR$_\mathrm{A}$ & BN \\ \hline\hline
A & Cifar-10 & \checkmark      & {\bf 0.89} & 0.76         & 0.85  & 0.87    & 0.86             & 0.38    & 0.36             & 0.33           \\ 
B  & Cifar-10 &      & {\bf 0.77} & 0.67         & 0.60  & 0.65    & 0.61             & 0.61    & 0.60             & -              \\ 
C & Mnist & \checkmark      & 0.86 & {\bf 0.89}         & 0.83  & 0.72    & 0.80             & 0.44    & 0.39             & 0.39           \\ 
D  & Mnist &      & {\bf 0.90} & 0.58         & 0.70  & 0.72    & 0.72             & 0.55    & 0.44             & -         
 \\ \hline
\end{tabular}
\caption{Rank correlation coefficient for each combination of sensitivity and quantization experiment. W/A subscript indicates using only either weights or activations. BN indicates the presence of batch-normalisation within the architecture.}
\label{table:predictive_correlations}
\end{table}

\newcommand\tabha{0.07}
\newcommand\tabhb{0.07}
\newcommand\tabhc{0.055}
\newcommand\tabhd{0.03}
\begin{figure}[h]
    \footnotesize
    \centering

    \begin{tabular}{cccc}
\multicolumn{4}{c}{\includegraphics[width=0.9\linewidth]{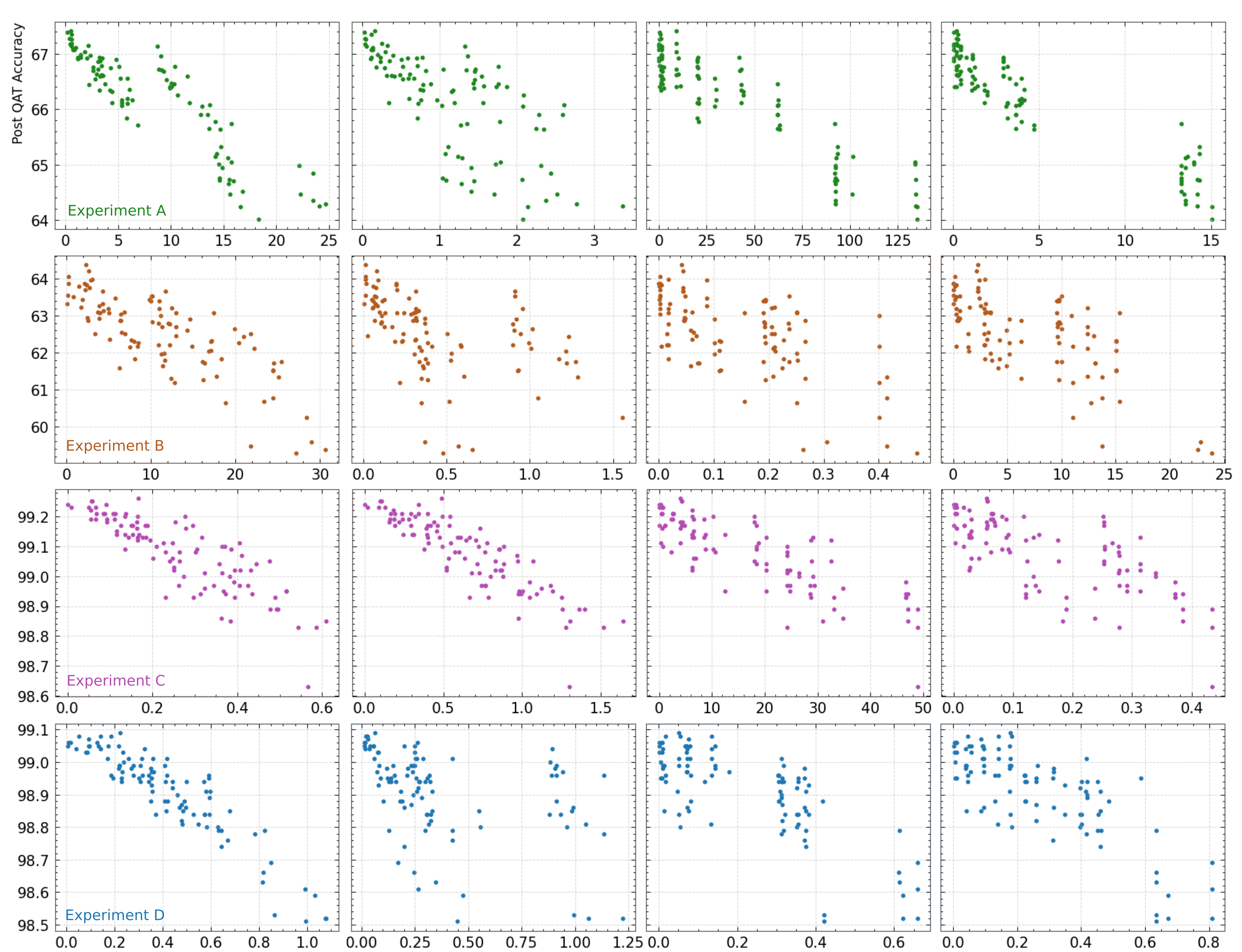}} 
\\ \rule{0pt}{3ex}
 \hspace {0.11\linewidth} \textbf{FIT} \hspace {\tabha\linewidth} & \hspace {\tabhb\linewidth} \textbf{QR} \hspace {\tabhb\linewidth}  & \hspace {\tabhc\linewidth} \textbf{Noise} \hspace {\tabhc\linewidth} & \hspace {\tabhd\linewidth}  \textbf{FIT$_\mathrm{W}$/HAWQ} \hspace {\tabhd\linewidth}
  \\
\end{tabular}
    \caption{Plots of the chosen predictive heuristic against final model performance.}
  \label{fig:predictive_plots}
\end{figure}

% \subsubsection{Ablation Discussion}

Figure~\ref{fig:predictive_plots} and Table~\ref{table:predictive_correlations} show the plots and rank-correlation results across all datasets and metrics. From these results, the key benefits of FIT are demonstrated. 

\textbf{FIT correlates well with final model performance.}~~~~From Table~\ref{table:predictive_correlations}, we can see that FIT has a consistently high rank correlation coefficient, demonstrating its application in informing final model performance. Whilst other methods vary in correlation, FIT remains consistent across experiments. 

\textbf{FIT combines contributions from both parameters and activations effectively.}~~~~We note from Table~\ref{table:predictive_correlations}, that combining FIT$_\mathrm{W}$ and FIT$_\mathrm{A}$ consistently improves performance. The same is not the case for QR. Concretely, the average increase in correlation with the inclusion of FIT$_\mathrm{A}$ is 0.12, whilst for QR$_\mathrm{A}$, the correlation decreases on average by 0.02. FIT scales the combination of activation and parameter contributions correctly, leading to a consistent increase in performance. Furthermore, from Table~\ref{table:predictive_correlations} we observe that the inclusion of batch-normalisation alters the relative contribution from parameters and activations. As a result, whilst FIT combines each contribution effectively and remains consistently high, QR does not. 

\subsection{Semantic Segmentation}

In this section, we illustrate the ability of FIT to generalise to larger datasets and architectures, as well as more diverse tasks. We choose to quantify the effects of MPQ on the U-Net architecture \citep{Unet}, for the Cityscapes semantic segmentation dataset \citep{Cordts2016Cityscapes}. In this experiment, we train 50 models with randomly generated bit configurations for both weights and activations. For evaluation, we use Jaccard similarity, or more commonly, the mean Intersection over Union (mIoU). The final correlation coefficient is between FIT and mIoU. EF trace computation is stopped at a tolerance of $0.01$, occurring at 82 individual iterations. The weight and activation traces for the trained, full precision, U-Net architecture is shown in Figure~\ref{fig:unet-traces}.

Figure \ref{fig:unet-traces} (c) illustrates the correlation between FIT and final model performance for the U-Net architecture on the Cityscapes dataset. In particular, we obtain a high final rank correlation coefficient of \textbf{0.86}.

\begin{figure}[h]
    \centering
    \subfloat[\centering Weight Trace]{{\includegraphics[width=0.32\linewidth]{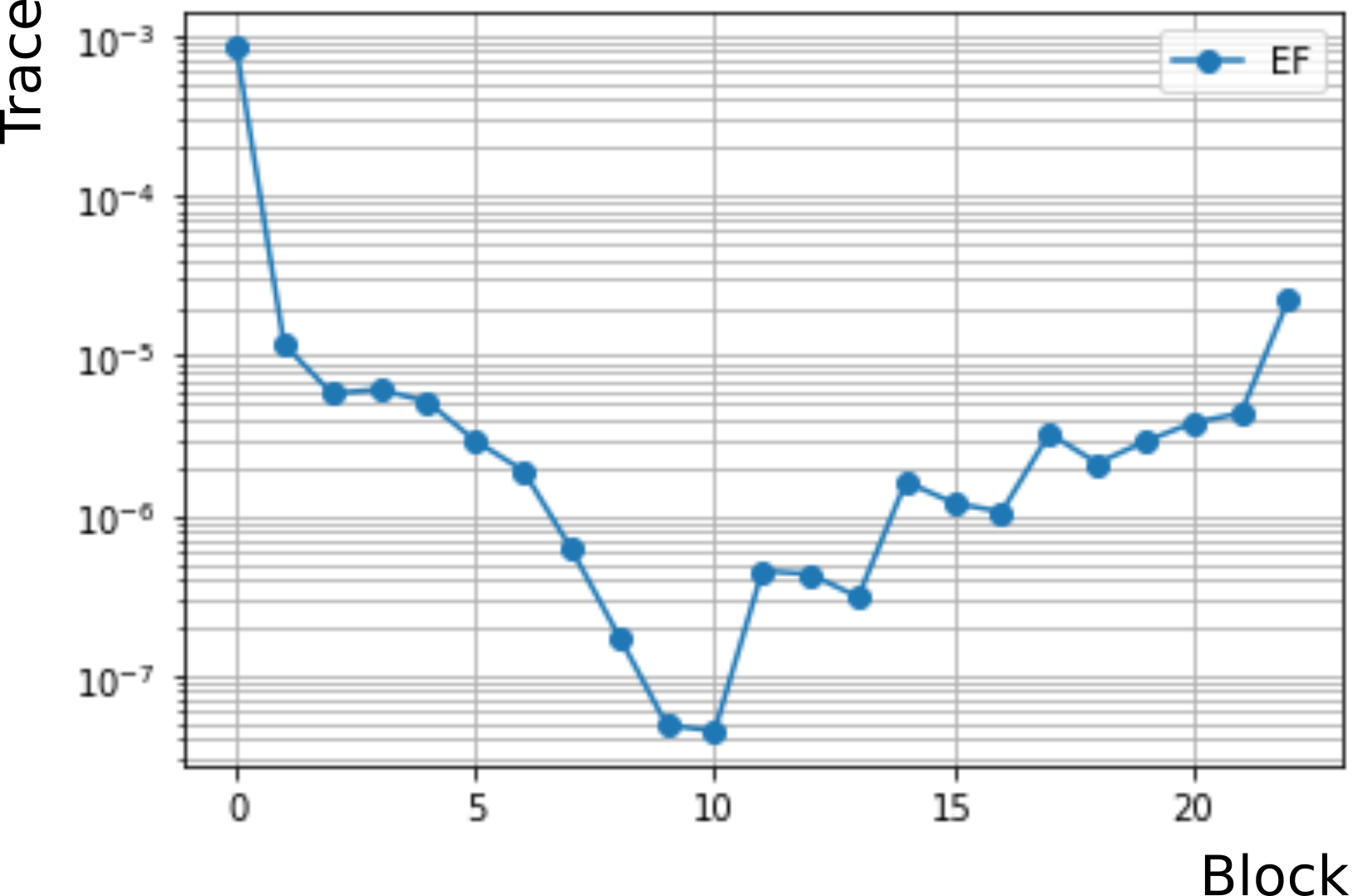} }} 
    \subfloat[\centering Activation Trace]{{\includegraphics[width=0.32\linewidth]{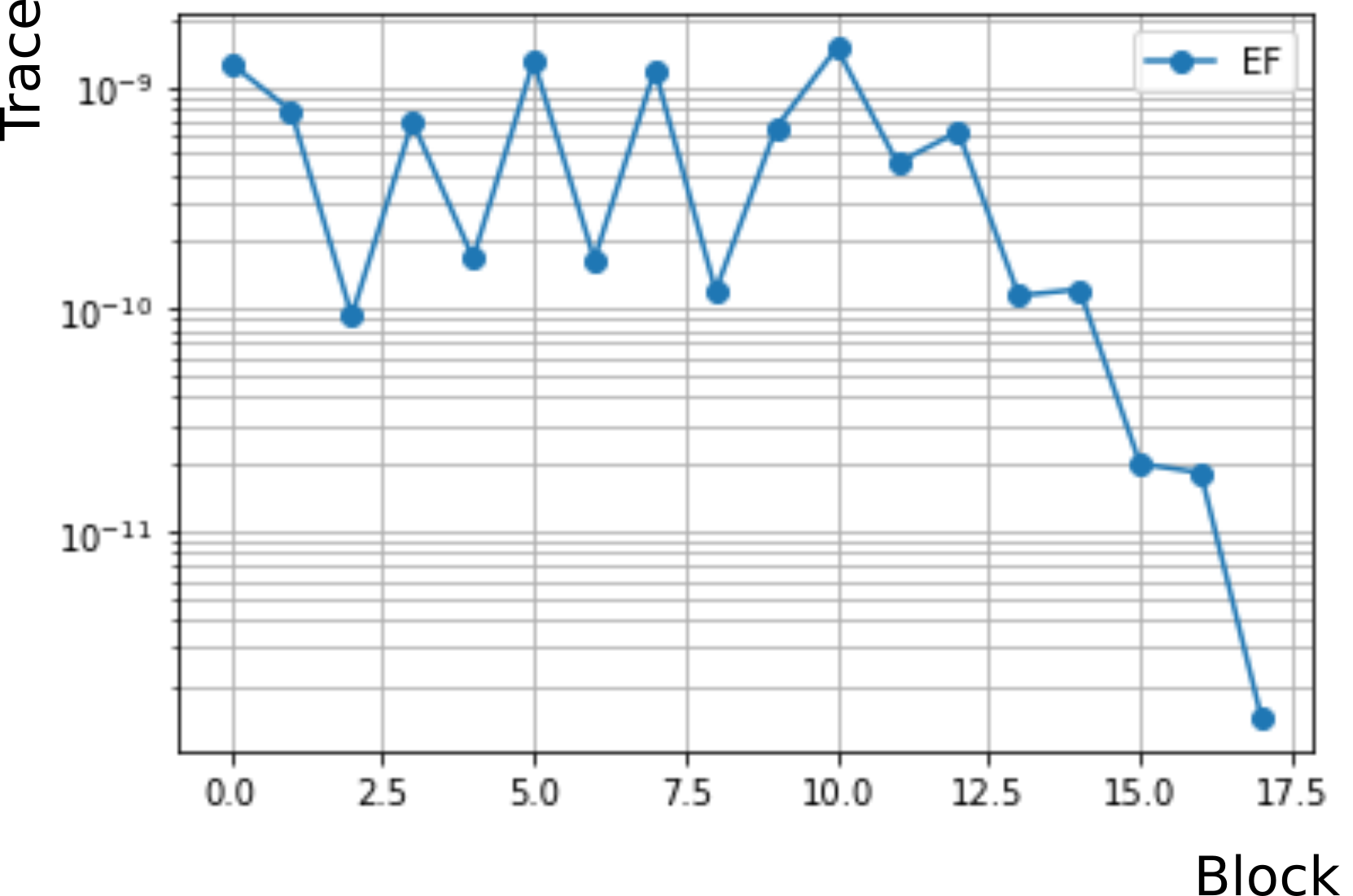} }}
    \subfloat[\centering FIT against mIoU]{{\includegraphics[width=0.3\linewidth]{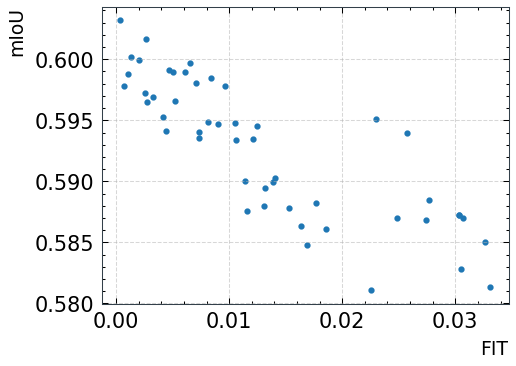} }}
    \caption{EF weight \textbf{(a)} and activation \textbf{(b)} traces for U-Net architecture on the Cityscapes dataset. \textbf{(c)} FIT against mIoU for 50 random MPQ configurations of U-Net on Cityscapes semantic segmentation}%
    \label{fig:unet-traces}%
\end{figure}

\subsection{Further practical details}

\textbf{Small perturbations}~~~~In Section \ref{sec:method}, we assume quantization noise, $\delta\theta$, is small with respect to the parameters themselves. This allows us to reliably consider a second-order approximation of the divergence.  In Figure \ref{fig:relative_perturbations}, we plot every parameter within the model for every single quantization configuration from experiment A. Almost all parameters adhere to this approximation. We observe that our setting covers most practical situations, and we leave characterising more aggressive quantization (1/2 bit) to future work. 

\textbf{Distributional shift}~~~~Modern DNNs often over-fit to their training dataset. More precisely, models are able to capture the small distributional shifts between training and testing subsets. FIT is computed from the trained model, using samples from the training dataset. As such, the extent to which FIT captures the final quantized model performance on the test dataset is, therefore, dependent on model over-fitting. Consider for example dataset D. We observe a high correlation of \textbf{0.98} between FIT and final \textit{training} accuracy, which decreases to \textbf{0.90} during \textit{testing}. This is further demonstrated in Figure \ref{fig:relative_perturbations}. For practitioners, FIT is more effective where over-fitting is less prevalent.

\begin{figure}[h]
    \centering
    \subfloat[\centering Noise vs Parameter magnitude]{{\includegraphics[height=0.2\linewidth]{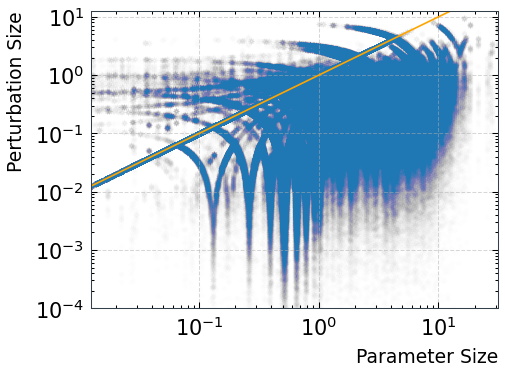} }} 
    \quad
    \subfloat[\centering FIT against training accuracy on experiment D]{{\includegraphics[height=0.2\linewidth]{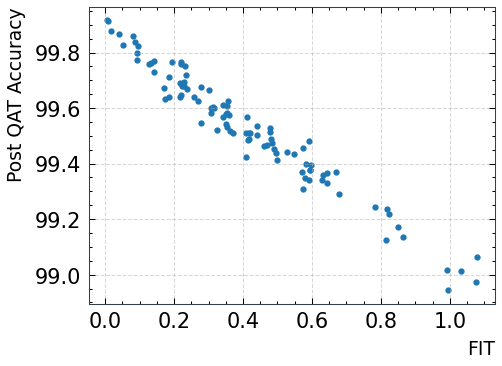} }}
    \caption{\textbf{(a)} Noise vs parameter magnitude. The line indicates equal magnitude, and is shown for reference. \textbf{(b)} FIT against final training accuracy for experiment D. The correlation coefficient, in this case, is 0.98.}%
    \label{fig:relative_perturbations}%
\end{figure}

% % \subsubsection{Biased noise estimates}

% % During the quantization process, further parameter changes associated with QAT are necessary to recover lost performance. Previous works \cite{HAWQ-V2:_Hessian_Aware_trace-Weighted_Quantization_of_Neural_Networks_2019,HAWQ:_Hessian_AWare_Quantization_of_Neural_Networks_with_Mixed-Precision_2019} do not take this into account when computing the associated quantization noise. Further, overcoming this approximation is cited as one of the major benefits of a more recent work \cite{Mixed-Precision_Neural_Network_Quantization_via_Learned_Layer-wise_Importance_2022}. In this section, we provide novel justification for this approximation during quantization. Once again, consider dataset A. The correlation between the true quantization noise before and after QAT is \textbf{0.997}. Thi remains similarly high if we consider our quantization noise model, which has a correlation between the true final noise, after QAT, of \textbf{0.970}. Our quantization noise model holds well in practice. By computing statistics for layers as an approximation to the expected parameter movement, FIT is able to successfully predict final performance, whilst being computed once from the full precision model. 

\section{Conclusion}\label{sec:discussion}

In this paper, we introduced FIT, a metric for characterising how quantization affects final model performance. Such a method is vital in determining high-performing mixed-precision quantization configurations which maximise performance given constraints on compression. 

We presented FIT from an information geometric perspective, justifying its general application and connection to the Hessian. By applying a quantization-specific noise model, as well as using the empirical fisher, we obtained a well-grounded and practical form of FIT.

Empirically, we show that FIT correlates highly with final model performance, remaining consistent across varying datasets, architectures and tasks. Our ablation studies established the importance of including activations. Moreover, FIT fuses the sensitivity contribution from parameters and activations yielding a single, simple to use, and informative metric. In addition, we show that FIT has very favourable convergence properties, making it orders of magnitude faster to compute. We also explored assumptions which help to guide practitioners. 

Finally, by training hundreds of MPQ configurations, we obtained correlation metrics from which to demonstrate the benefits of using FIT. Previous works train a small number of configurations. We encourage future work in this field to replicate our approach, for meaningful evaluation. 

\paragraph{Future Work}

Our contribution, FIT, worked quickly and effectively in estimating the final performance of a quantized model. However, as noted in Section \ref{sec:experiments}, FIT is susceptible to the distributional shifts associated with model over-fitting. In addition, FIT must be computed from the \textit{trained}, full-precision model. We believe dataset agnostic methods provide a promising direction for future research, where the MPQ configurations can be determined from initialisation.

% \subsubsection*{Acknowledgments}
% Use unnumbered third level headings for the acknowledgments. All
% acknowledgments, including those to funding agencies, go at the end of the paper.

% \section*{Reproducibility Statement}

% We have included sample code for generating the parameter and activation traces, as well as generating and analysing quantized models. Further experimental details are presented in Appendix \ref{section:further_exp_details}. Relevant complete proofs are shown in Appendices \ref{section: quantization_noise} and \ref{section: FIT details}.

\bibliography{biblio}
\bibliographystyle{iclr2023_conference}

\appendix

\section{Quantization Aware Training (QAT)} \label{section: qat}

Quantization can lead to significant performance degradation \cite{quant_survey}. It is, therefore,
necessary to perform additional quantization aware training (QAT) in order to
recover this lost performance. QAT involves simulating the effects of quantization during training. The quantization function~$Q(\theta)$ is applied to the floating point parameter values during the forward pass of the network. $Q(\theta)$ is piece-wise flat. As a result, to propagate gradients, we use a straight-through estimator (STE)~\citep{hubara2016binarized}. In effect, this bypasses the quantization function during gradient computation in the backwards pass. Figure \ref{fig:qat_process} illustrates this process. QAT also involves learning the quantization ranges. For weights, a max-min approach is taken, whilst for activations, an exponential moving average is used. As a result, the scaling and zero points for quantization are mapped correctly. In addition, batch-normalisation parameters can be folded into weights for efficiency.

\begin{figure}[b]
    \centering
    \includegraphics[width=0.3\textwidth]{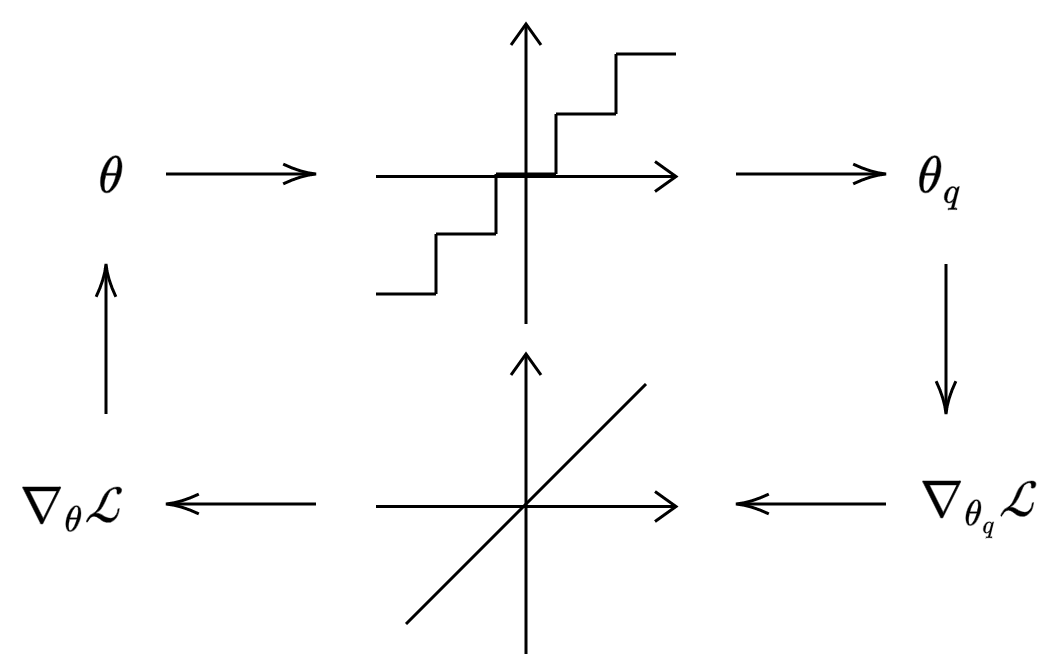}
    \caption{Overview QAT - $Q(\theta)$ is applied during the forward pass, and an STE is used in the backwards pass.}
    \label{fig:qat_process}
\end{figure}

\section{Activation Traces} \label{section: activation_traces}
~\begin{figure}[h]
    \centering
    \subfloat[\centering ResNet-18]{{\includegraphics[height=\hlfour cm]{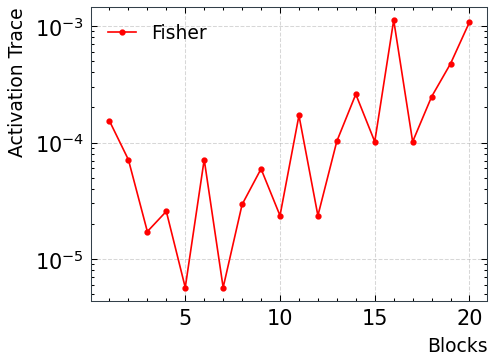} }} 
    \subfloat[\centering ResNet-50]{{\includegraphics[height=\hlfour cm]{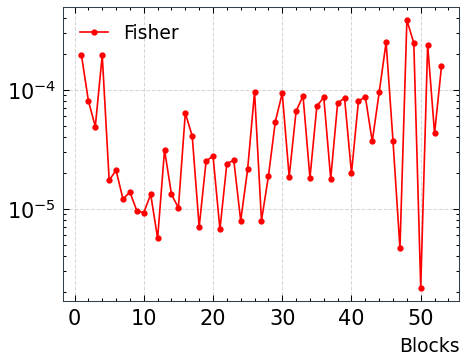} }}
    \subfloat[\centering Mobilenet-V2]{{\includegraphics[height=\hlfour cm]{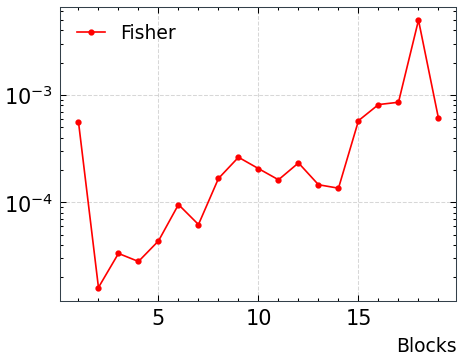}}}
    \subfloat[\centering Inception-V3]{{\includegraphics[height=\hlfour cm]{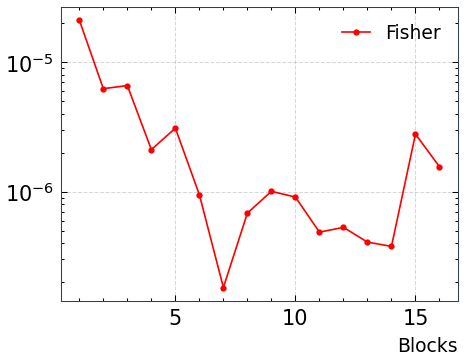} }}
    \caption{EF Activation traces for four classification models. }%
    \label{fig:act-traces}%
\end{figure}

\section{Estimator Comparison} \label{section:estim_comp}

For the four classification models considered, Tables~\ref{table: estimator-variances} and \ref{table: iteration-times} indicate the estimator variances and iteration times associated with the EF and Hessian for a variety of batch sizes. Whilst the EF exhibits expected variance reduction behaviour, the Hessian does not, and requires a minimum, model-dependent, batch size to achieve stable behaviour. In all cases, the EF estimator variance is orders of magnitude lower than that of the Hessian. Means and variances are estimated over 3 runs of 200 samples. Deviations are normalised w.r.t. the trace magnitude, taking the average across blocks/layers. This ensures all the statistics are comparable, regardless of convergence bias. The relative speedup associated with a fixed tolerance is computed as follows:

\begin{equation*}
    s = \frac{\sigma_{H}^{2} \cdot t_{H}}{\sigma_{EF}^{2} \cdot t_{EF}}
\end{equation*}

Where $\sigma^{2}$ indicates the estimator variance and $t$ is iteration time. This follows the Monte-Carlo estimate variance properties.

\begin{table}[h]
\footnotesize
\centering
\caption{Estimator Variances associated with the EF and Hessian for a variety of batch sizes, over 3 runs of 200 iterations for each model and batch size.}
\label{table: estimator-variances}

\begin{tabular}{crrrrrrrrrr}
\hline
Batch Size & \multicolumn{5}{c}{\textbf{EF}} & \multicolumn{5}{c}{\textbf{Hessian}} \\
 & \multicolumn{1}{c}{1} & \multicolumn{1}{c}{2} & \multicolumn{1}{c}{3} & \multicolumn{1}{c}{Mean} & \multicolumn{1}{c}{Stdev} & \multicolumn{1}{c}{1} & \multicolumn{1}{c}{2} & \multicolumn{1}{c}{3} & \multicolumn{1}{c}{Mean} & \multicolumn{1}{c}{Stdev} \\ \hline\hline
\multicolumn{11}{c}{ResNet-18} \\ \hline
4 & 0.82 & 0.99 & 1.27 & 1.03 & 0.23 & 10.64 & 9.77 & 6.89 & 9.10 & 1.96 \\
8 & 0.61 & 0.42 & 0.61 & 0.55 & 0.11 & 5.01 & 4.28 & 4.38 & 4.56 & 0.40 \\
16 & 0.33 & 0.30 & 0.26 & 0.30 & 0.03 & 1.77 & 2.32 & 2.03 & 2.04 & 0.27 \\
32 & 0.13 & 0.18 & 0.15 & \textbf{0.15} & \textbf{0.03} & 1.09 & 1.07 & 1.11 & \textbf{1.09} & \textbf{0.02} \\ \hline
\multicolumn{11}{c}{ResNet-50} \\ \hline
4 & 1.81 & 2.44 & 2.03 & 2.10 & 0.32 & - & - & - & - & - \\
8 & 1.15 & 1.06 & 1.19 & 1.13 & 0.07 & 22.57 & 45.45 & 76.14 & 48.05 & 26.88 \\
16 & 0.55 & 0.71 & 0.51 & 0.59 & 0.11 & 17.68 & 22.41 & 14.56 & 18.21 & 3.95 \\
32 & 0.26 & 0.34 & 0.32 & \textbf{0.31} & \textbf{0.04} & 8.60 & 5.65 & 6.49 & \textbf{6.91} & \textbf{1.52} \\ \hline
\multicolumn{11}{c}{MobileNet-V2} \\ \hline
4 & 1.18 & 1.29 & 1.03 & 1.17 & 0.13 & - & - & - & - & - \\
8 & 0.67 & 0.61 & 0.71 & 0.66 & 0.05 & 17.65 & 72.02 & 34.19 & 41.28 & 27.87 \\
16 & 0.36 & 0.35 & 0.37 & 0.36 & 0.01 & 9.36 & 10.54 & 5,092.34 & 9.95 & 0.84 \\
32 & 0.25 & 0.24 & 0.22 & \textbf{0.24} & \textbf{0.01} & 4.47 & 4.75 & 5.23 & \textbf{4.81} & \textbf{0.38} \\ \hline
\multicolumn{11}{c}{Inception-V3} \\ \hline
4 & 3.97 & 3.00 & 2.77 & 3.24 & 0.64 & - & - & - & - & - \\
8 & 1.92 & 1.24 & 1.95 & 1.70 & 0.40 & - & - & - & - & - \\
16 & 0.89 & 0.66 & 0.69 & 0.75 & 0.13 & 31.55 & 65.36 & 109.84 & 68.92 & 39.26 \\
32 & 0.44 & 0.45 & 0.39 & \textbf{0.43} & \textbf{0.03} & 13.81 & 13.96 & 13.10 & \textbf{13.62} & \textbf{0.46} \\ \hline
\end{tabular}
\end{table}

\begin{table}[h]
\centering
\footnotesize
\caption{Iteration times associated with the EF and Hessian for a variety of batch sizes, averaged over 3 runs of 200 iterations for each model and batch size.}
\label{table: iteration-times}
\resizebox{\textwidth}{!}{%
\begin{tabular}{ccccccccccc}
\hline
Batch Size & \multicolumn{5}{c}{\textbf{EF (ms)}} & \multicolumn{5}{c}{\textbf{Hessian (ms)}} \\
 & 1 & 2 & 3 & Mean & Stdev & 1 & 2 & 3 & Mean & Stdev \\ \hline\hline
\multicolumn{11}{c}{ResNet-18} \\ \hline
4 & 11.06 & 11.29 & 11.29 & 11.21 & 0.13 & 41.91 & 40.99 & 40.58 & 41.16 & 0.68 \\
8 & 16.80 & 16.86 & 16.98 & 16.88 & 0.09 & 62.80 & 61.54 & 61.16 & 61.83 & 0.86 \\
16 & 24.07 & 24.46 & 24.12 & 24.22 & 0.21 & 100.01 & 95.40 & 98.38 & 97.93 & 2.34 \\
32 & 47.81 & 47.77 & 47.75 & \textbf{47.78} & \textbf{0.03} & 186.92 & 186.80 & 185.90 & \textbf{186.54} & \textbf{0.56} \\ \hline
\multicolumn{11}{c}{ResNet-50} \\ \hline
4 & 30.89 & 30.77 & 30.75 & 30.80 & 0.07 & 150.69 & 148.17 & 149.08 & 149.31 & 1.27 \\
8 & 47.90 & 48.50 & 48.49 & 48.30 & 0.34 & 196.37 & 201.74 & 200.99 & 199.70 & 2.91 \\
16 & 83.67 & 83.54 & 82.94 & 83.38 & 0.39 & 332.11 & 341.01 & 338.11 & 337.08 & 4.54 \\
32 & 152.44 & 151.92 & 151.69 & \textbf{152.02} & \textbf{0.38} & 639.62 & 639.82 & 637.96 & \textbf{639.13} & \textbf{1.02} \\ \hline
\multicolumn{11}{c}{MobileNet-V2} \\ \hline
4 & \multicolumn{1}{r}{19.32} & \multicolumn{1}{r}{19.38} & \multicolumn{1}{r}{19.57} & \multicolumn{1}{r}{19.42} & \multicolumn{1}{r}{0.13} & \multicolumn{1}{r}{709.06} & \multicolumn{1}{r}{705.77} & \multicolumn{1}{r}{704.53} & \multicolumn{1}{r}{706.45} & \multicolumn{1}{r}{2.34} \\
8 & \multicolumn{1}{r}{21.49} & \multicolumn{1}{r}{21.87} & \multicolumn{1}{r}{21.96} & \multicolumn{1}{r}{21.77} & \multicolumn{1}{r}{0.25} & \multicolumn{1}{r}{913.47} & \multicolumn{1}{r}{917.50} & \multicolumn{1}{r}{914.63} & \multicolumn{1}{r}{915.20} & \multicolumn{1}{r}{2.07} \\
16 & \multicolumn{1}{r}{33.85} & \multicolumn{1}{r}{33.83} & \multicolumn{1}{r}{33.73} & \multicolumn{1}{r}{33.80} & \multicolumn{1}{r}{0.06} & \multicolumn{1}{r}{1,460.03} & \multicolumn{1}{r}{1,464.95} & \multicolumn{1}{r}{1,460.89} & \multicolumn{1}{r}{1,461.96} & \multicolumn{1}{r}{2.62} \\
32 & \multicolumn{1}{r}{58.65} & \multicolumn{1}{r}{58.80} & \multicolumn{1}{r}{59.07} & \multicolumn{1}{r}{\textbf{58.84}} & \multicolumn{1}{r}{\textbf{0.21}} & \multicolumn{1}{r}{2,570.02} & \multicolumn{1}{r}{2,575.81} & \multicolumn{1}{r}{2,574.66} & \multicolumn{1}{r}{\textbf{2,573.50}} & \multicolumn{1}{r}{\textbf{3.06}} \\ \hline
\multicolumn{11}{c}{Inception-V3} \\ \hline
4 & \multicolumn{1}{r}{56.30} & \multicolumn{1}{r}{59.83} & \multicolumn{1}{r}{57.00} & \multicolumn{1}{r}{57.71} & \multicolumn{1}{r}{1.87} & \multicolumn{1}{r}{265.04} & \multicolumn{1}{r}{276.99} & \multicolumn{1}{r}{279.17} & \multicolumn{1}{r}{273.73} & \multicolumn{1}{r}{7.61} \\
8 & \multicolumn{1}{r}{83.77} & \multicolumn{1}{r}{83.21} & \multicolumn{1}{r}{82.41} & \multicolumn{1}{r}{83.13} & \multicolumn{1}{r}{0.68} & \multicolumn{1}{r}{325.06} & \multicolumn{1}{r}{335.16} & \multicolumn{1}{r}{332.37} & \multicolumn{1}{r}{330.86} & \multicolumn{1}{r}{5.21} \\
16 & \multicolumn{1}{r}{135.46} & \multicolumn{1}{r}{132.97} & \multicolumn{1}{r}{132.58} & \multicolumn{1}{r}{133.67} & \multicolumn{1}{r}{1.56} & \multicolumn{1}{r}{530.60} & \multicolumn{1}{r}{537.80} & \multicolumn{1}{r}{535.57} & \multicolumn{1}{r}{534.66} & \multicolumn{1}{r}{3.69} \\
32 & \multicolumn{1}{r}{236.07} & \multicolumn{1}{r}{235.14} & \multicolumn{1}{r}{235.09} & \multicolumn{1}{r}{\textbf{235.43}} & \multicolumn{1}{r}{\textbf{0.55}} & \multicolumn{1}{r}{901.72} & \multicolumn{1}{r}{908.35} & \multicolumn{1}{r}{955.94} & \multicolumn{1}{r}{\textbf{905.04}} & \multicolumn{1}{r}{\textbf{4.69}} \\ \hline
\end{tabular}
}
\end{table}

\section{Further Experimental Details} \label{section:further_exp_details}

\begin{figure}[h]
    \centering
    \includegraphics[width=0.6\linewidth]{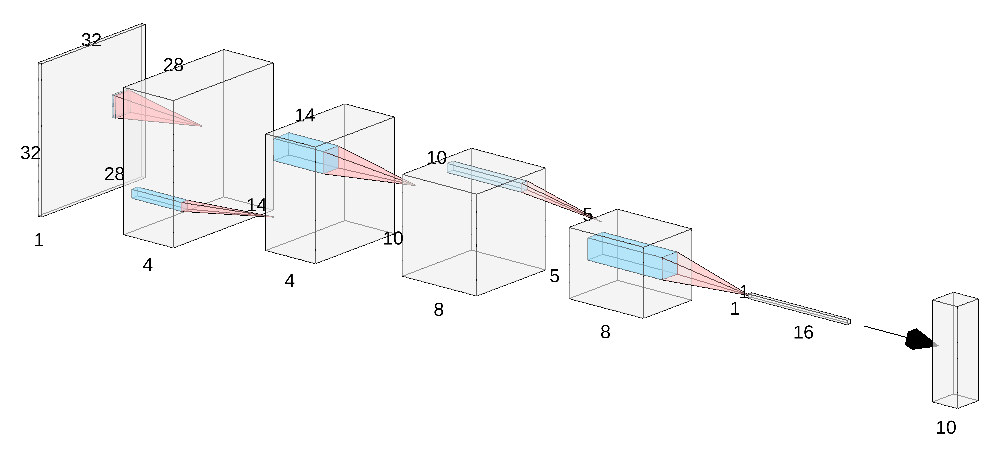}
    \caption{Small convolutional classifier architecture used in the experiments detailed in Section \ref{sec:experiments}}
    \label{fig:conv_arch}
\end{figure}

For each of experiments A, B, C and D, we trained 100 convolutional classifiers on the Cifar10 and Mnist datasets. In both cases, we performed experiments with and without the inclusion of batch normalisation layers before each activation. The architecture used consisted of three convolutional layers, followed by a fully connected classification head, with ReLU activations in between. The first two blocks are also followed by MaxPool layers. This is shown in Figure \ref{fig:conv_arch}. Between the Mnist and Cifar10 datasets, the number of filters was scaled by two. To obtain the data, we first trained a full precision version of the network for 50 epochs using the Adam optimizer. A learning rate of 0.01 was chosen, and increased to 0.1 with the inclusion of batch normalization. A cosine-annealing learning rate schedule was used. We then used this trained full precision model as a checkpoint to initialise our randomly chosen mixed precision configurations, and training was continued for another 30 epochs with a learning rate reduction of 0.1, using the same schedule. Quantization configurations were chosen uniformly at random from the possible set of bit precisions: [8,6,4,3]. In these cases, initialisation and training were identical across all MPQ models, so as to compare the final performance.

\subsection{Further Details for Comparison Metrics}

\textbf{QR:}~~~~The QR baseline represents the use of the quantization ranges to replace the EF as the sensitivity metric:

\begin{equation*}
    \sum_{l}^{L} \frac{1}{|\theta_{max} - \theta_{min}|} \cdot  \left[\frac{\theta_{max} - \theta_{min}}{2^{b_{l}}-1}\right]^2
\end{equation*}

\textbf{BN:}~~~~the BN baseline represents the use of the batch-norm scaling factor $\gamma$ to replace the EF as the sensitivity metric.

\begin{equation*}
    \sum_{l}^{L} \frac{1}{\gamma} \cdot  \left[\frac{\theta_{max} - \theta_{min}}{2^{b_{l}}-1}\right]^2
\end{equation*}

\textbf{FIT$_\mathrm{W/A}$:}~~~~To obtain FIT$_\mathrm{W}$, we remove the component which takes into account activation quantization sensitivity. Similarly, to obtain FIT$_\mathrm{A}$, we remove the component which takes into account weight quantization sensitivity. Recall that in Section~\ref{sec:method}, we extend parameter space to include the activation statistics. In these ablations, we remove (W) or retain (A) these contributions.

\section{Quantization and Noise Model} \label{section: quantization_noise}

The following analysis serves to motivate the direct connection between model perturbation and bit configuration. During quantization, it is common historically to assume the quantization error is uncorrelated with the original signal, yielding an approximately uniform distribution. This leads to an important assumption: The quantization error of each parameter is independent and uniformly distributed with a mean zero. Figure \ref{fig:rn18-b12-quant} serves to motivate the validity of this assumption.

\begin{figure}[h]
    \centering
    \subfloat[\centering Weight Distribution ]{{\includegraphics[width=0.32\linewidth]{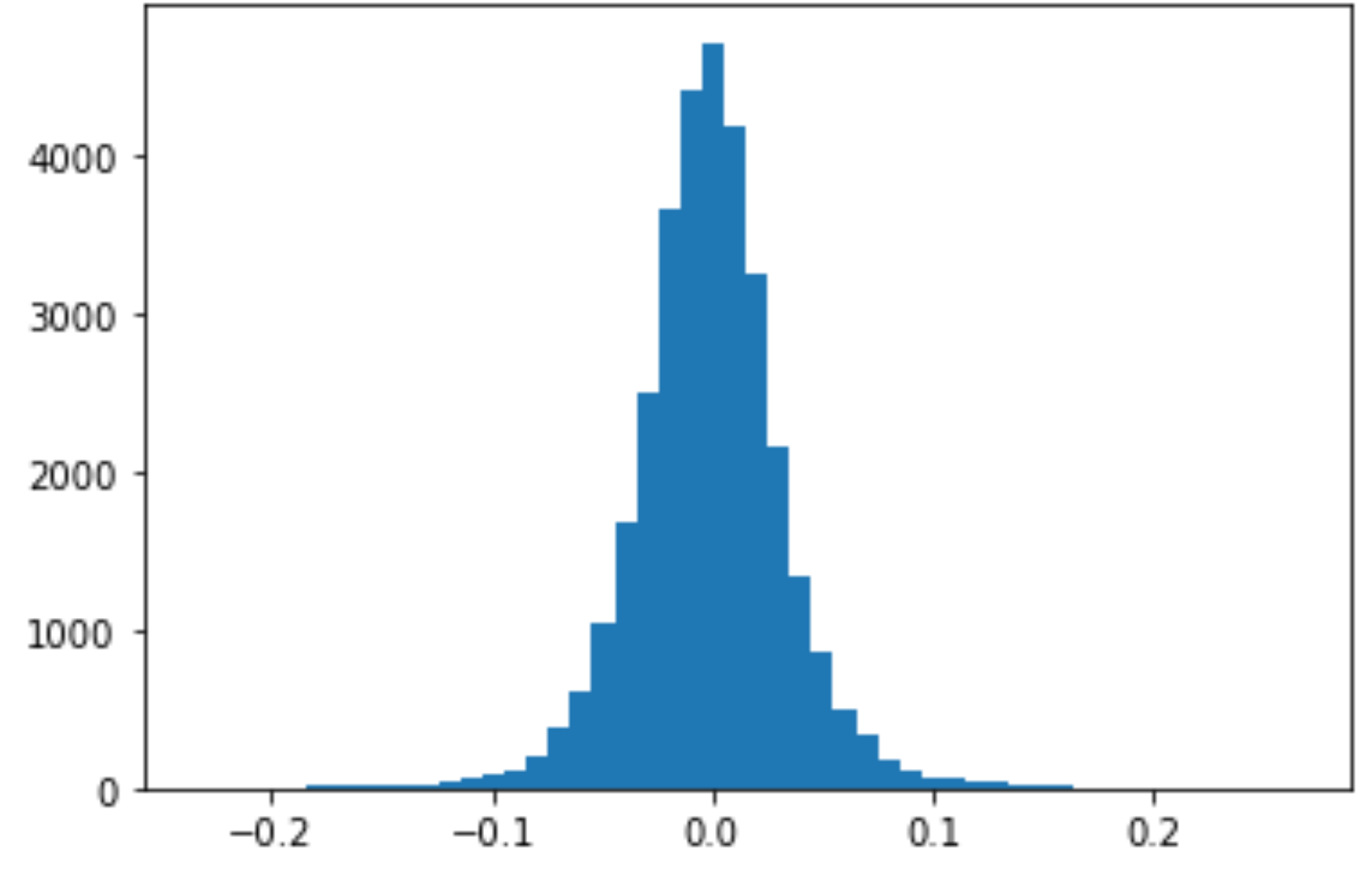} }}%
    % \qquad
    \subfloat[\centering Quantized Weight Distribution]{{\includegraphics[width=0.32\linewidth]{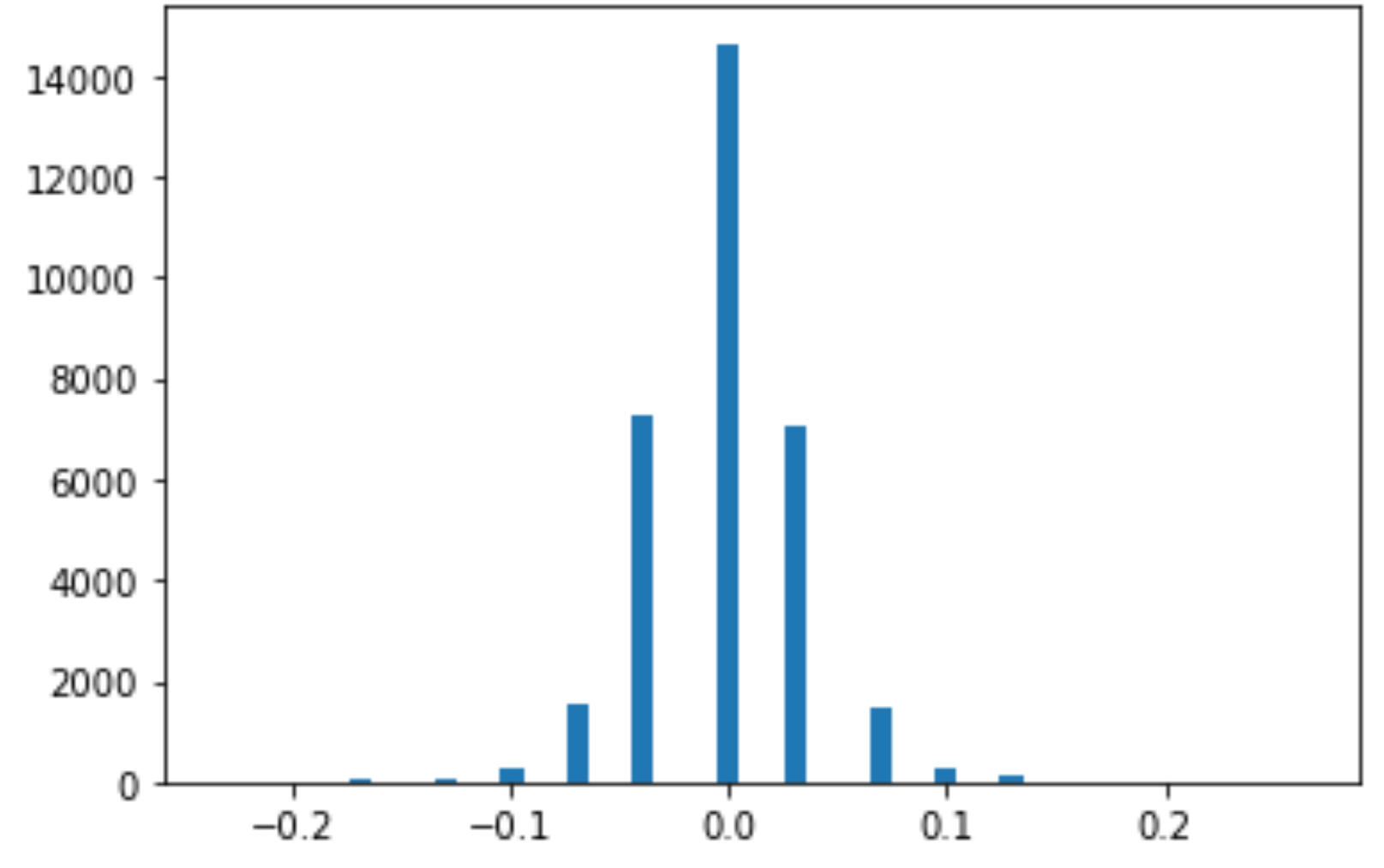} }}%
    % \qquad
    \subfloat[\centering Quantization Error Distribution]{{\includegraphics[width=0.32\linewidth]{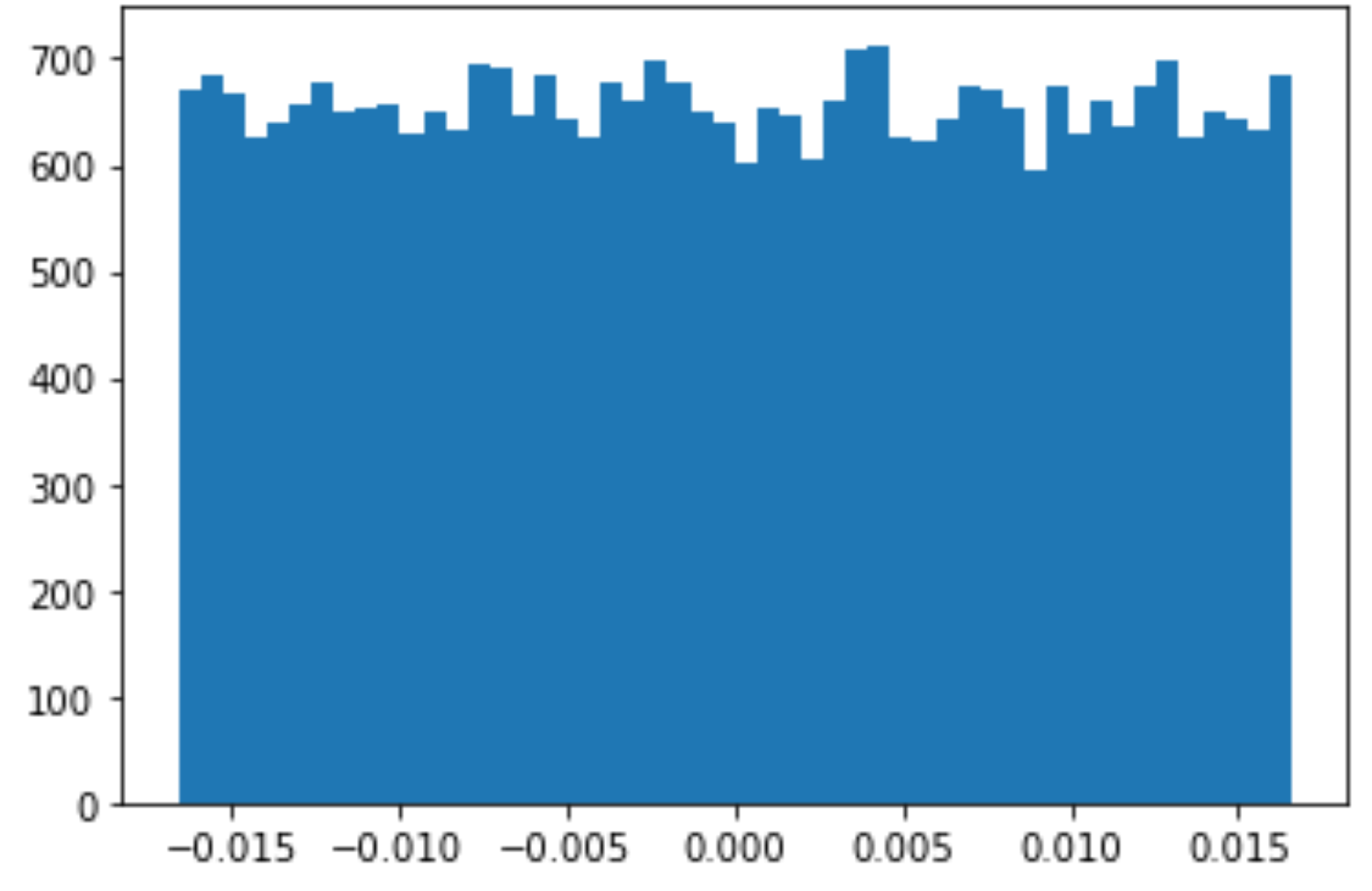} }}%
    
    \caption{ResNet-18, block 12, quantization distribution analysis showing the validity of the uniform noise assumption.}%
    \label{fig:rn18-b12-quant}%
\end{figure}

Under this assumption, it is possible to evaluate this noise power:

\begin{equation*}
    \mathbb{E}[\delta\theta^2] = \frac{\Delta^{2}}{12}, 
\end{equation*}

where $\Delta$ denotes the step width of quantization, and can be modelled directly by the quantization scheme being used. In this case, uniform (min-max) quantization. Letting $Q(\theta): \theta \rightarrow \theta_{q}$ denote the quantization function, uniform quantization can be expressed as follows:
\begin{align*}
    Q(\theta) &=  \frac{\theta - \theta_{min}}{\Delta} + \theta_{min}\\
    \Delta &= \frac{\theta_{max} - \theta_{min}}{2^{b} - 1}
\end{align*}
where $b$ is the quantization bit precision. This yields the following quantization noise power:

\begin{equation*}
    \mathbb{E}[\delta\theta^2] = \frac{1}{12} \left[\frac{\theta_{max} - \theta_{min}}{2^{b}-1}\right]^2
\end{equation*}

Consider now our revised model perturbation (w.l.o.g. remove the constant factor),
\begin{equation*}
    \sum_{l}^{L} Tr(\hat{I}(\theta_{l})) \cdot  \left[\frac{\theta_{max} - \theta_{min}}{2^{b_{l}}-1}\right]^2.
\end{equation*}

Each bit configuration - $\{b_{l}\}_{1}^{L}$ - yields a unique FIT value from which we can estimate final performance.

\section{FIT details} \label{section: FIT details}

\subsection{Standard relations in information geometry}

\begin{proposition}
The EF defines an estimate for the natural metric tensor of the statistical manifold induced by the parameterised model, obtained infinitesimally (in this case) from an expansion of the KL divergence.
\end{proposition}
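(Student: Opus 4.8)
The plan is to derive the statement directly from a second-order Taylor expansion of the KL divergence, following the standard information-geometric argument, and then to identify the quantity defined in the text as the natural Monte-Carlo estimate of the resulting metric tensor. First I would write $D_{KL}(p_\theta \| p_{\theta+\delta\theta}) = -\int p(y|x,\theta)\log\frac{p(y|x,\theta+\delta\theta)}{p(y|x,\theta)}\,dy$, with an outer expectation over $x$, and expand $\log p(y|x,\theta+\delta\theta)$ around $\theta$ to second order. The zeroth-order term cancels against the denominator; the first-order term is $\int p_\theta \nabla_\theta \log p_\theta \, dy \cdot \delta\theta$, which vanishes because $\int p_\theta \nabla_\theta \log p_\theta\,dy = \nabla_\theta \int p_\theta\,dy = \nabla_\theta 1 = 0$. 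What remains at second order is $\tfrac12 \delta\theta^\top \big(-\mathbb{E}_{p_\theta}[\nabla_\theta^2 \log p_\theta]\big)\delta\theta$, so the (expected) Hessian of the negative log-likelihood is the quadratic form governing the divergence infinitesimally.

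Next I would invoke the usual regularity conditions (differentiation under the integral sign) to rewrite $-\mathbb{E}_{p_\theta}[\nabla_\theta^2\log p_\theta]$ as $\mathbb{E}_{p_\theta}[\nabla_\theta \log p_\theta \nabla_\theta \log p_\theta^\top] = I(\theta)$, the FIM quoted in Section~\ref{sec:method}. This positive semi-definite bilinear form, read off in the parameter chart, is precisely the Fisher--Rao Riemannian metric on the statistical manifold induced by the model; reparameterisation invariance follows from the chain rule, which is what justifies calling it the \emph{natural} metric tensor. At this point the ``obtained infinitesimally from an expansion of the KL divergence'' part of the claim is established.

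Finally I would connect this to the empirical Fisher. The FIM is an expectation over the joint $p_\theta(x,y)$, which is unavailable; replacing the expectation over $x$ by the empirical distribution of the training inputs and — the one genuine approximation — replacing the inner expectation over $y \sim p(\cdot|x,\theta)$ by the single observed label $y_i$, together with $f_\theta(z_i) = -\log p(y_i|x_i,\theta)$, yields exactly $\hat I(\theta) = \tfrac1N\sum_i \nabla_\theta f_\theta(z_i)\nabla_\theta f_\theta(z_i)^\top$. Hence $\hat I(\theta)$ is a plug-in estimator of $I(\theta)$, i.e. an estimate of the natural metric tensor, as claimed.

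The main obstacle I anticipate is being precise about the sense in which $\hat I$ ``estimates'' $I$: it is an unbiased estimator of the FIM only if the labels are resampled from the model rather than taken from the data, and the equivalence of the outer-product and Hessian forms requires the interchange-of-limits regularity conditions (and, for exact agreement with the loss-landscape Hessian, a well-specified model at a critical point). These are exactly the subtleties flagged around the empirical-Fisher discussion in the text, so the proof should state them explicitly rather than gloss over them.
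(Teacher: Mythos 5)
Your proposal is correct and follows essentially the same route as the paper's own proof: a second-order expansion of the KL divergence, cancellation of the zeroth- and first-order terms via the normalisation identity $\nabla_\theta \int p_\theta = 0$ (and likewise for the second-derivative term), identification of the surviving quadratic form with the Fisher--Rao metric, and a final plug-in substitution of the empirical distribution to obtain $\hat{I}(\theta)$. If anything, you are more careful than the paper at the last step --- you explicitly separate the (harmless) replacement of the expectation over $x$ by the empirical input distribution from the (genuine) replacement of the inner expectation over $y \sim p(\cdot\,|\,x,\theta)$ by the observed labels, whereas the paper silently swaps $\sum_{D} p_\theta(\cdot)$ for $\frac{1}{N}\sum_{D}(\cdot)$; your flagged caveats about unbiasedness match the limitations the paper itself cites elsewhere.
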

\begin{proof}
For brevity, denote $p(z, \theta)$ as $p_{\theta}$. Here we describe the discrete case, however, the continuous follows similarly. 
\begin{align*}
    & D_{KL}(p_{\theta} || p_{\theta+\delta\theta}) = \sum_{D}{p_{\theta}\log{\frac{p_{\theta}}{p_{\theta+\delta\theta}}}} \\
    & \approx \sum_{D}{p_{\theta}{  \log{p_{\theta}} - p_{\theta}\left[ \log{p_{\theta}} + {\frac{\nabla_{\theta}p_{\theta}}{p_{\theta}}}^{T}\delta\theta + \frac{1}{2}\delta\theta^{T}\left(\frac{\nabla^{2}_{\theta}p_{\theta}}{p_{\theta}} - \frac{\nabla_{\theta}p_{\theta} \nabla_{\theta}{p_{\theta}}^{T}}{p_{\theta}^2}\right)\delta\theta \right] }} \\
    & \approx -\left[\sum_{D}\nabla_{\theta}{p_{\theta}}\right]^{T}\delta\theta - \frac{1}{2}\delta\theta^{T}\left[\sum_{D}\nabla_{\theta}^{2}{p_{\theta}}\right]\delta\theta + \frac{1}{2}\delta\theta^{T}\left[\sum_{D}p_{\theta}\nabla_{\theta}\log{p_{\theta}} {\nabla_{\theta}\log{p_{\theta}}}^{T}\right]\delta\theta \\
    & \approx -\left[\nabla_{\theta}\sum_{D}{p_{\theta}}\right]^{T}\delta\theta - \frac{1}{2}\delta\theta^{T}\left[\nabla_{\theta}^{2}\sum_{D}{p_{\theta}}\right]\delta\theta + \frac{1}{2}\delta\theta^{T}\left[\sum_{D}p_{\theta}\nabla_{\theta}\log{p_{\theta}} {\nabla_{\theta}\log{p_{\theta}}}^{T}\right]\delta\theta \\
    & \approx \frac{1}{2}\delta\theta^{T}\left[\sum_{D}p_{\theta}\nabla_{\theta}\log{p_{\theta}} {\nabla_{\theta}\log{p_{\theta}}}^{T}\right]\delta\theta \\
    & \approx \frac{1}{2}\delta\theta^{T}\left[\frac{1}{N}\sum_{D}\nabla_{\theta}\log{p_{\theta}} {\nabla_{\theta}\log{p_{\theta}}}^{T}\right]\delta\theta \\
    & = \frac{1}{2}\delta\theta^{T} \hat{I}(\theta) \delta\theta
\end{align*} 
\end{proof}

\subsection{Obtaining FIT}

\begin{proposition}
FIT is obtained (under mild noise assumptions) by taking expectation over the Fisher-Rao metric.
\end{proposition}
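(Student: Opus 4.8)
The plan is to trace through the derivation already sketched in Section~\ref{sec:method}, but presented as a clean proof that starts from the Fisher--Rao metric and arrives at the layer-wise FIT heuristic under the stated noise assumptions. First I would recall that the squared Rao (Fisher--Rao) line element between $p_\theta$ and a nearby $p_{\theta+\delta\theta}$ is $ds^2 = \delta\theta^{T} I(\theta)\,\delta\theta$, which by the preceding proposition coincides (infinitesimally, up to the factor $\tfrac12$ in the KL expansion) with the quadratic form defined by the Fisher information matrix $I(\theta)$. Treating the quantization perturbation $\delta\theta$ as a random vector, I would then take the expectation of this quadratic form and apply the standard identity for the expectation of a quadratic form in a random vector:
\begin{equation*}
    \mathbb{E}\!\left[\delta\theta^{T} I(\theta)\,\delta\theta\right] = \mathbb{E}[\delta\theta]^{T} I(\theta)\,\mathbb{E}[\delta\theta] + Tr\!\left(I(\theta)\,Cov[\delta\theta]\right).
\end{equation*}

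Next I would invoke the two mild noise assumptions: that quantization noise is zero-mean, $\mathbb{E}[\delta\theta]=0$, which kills the first term, and that it is uncorrelated across parameters, $Cov[\delta\theta] = \mathrm{Diag}(\mathbb{E}[\delta\theta^2])$, so that only the diagonal of $I(\theta)$ enters. This gives the general form $\Omega = Tr\!\left(I(\theta)\,\mathrm{diag}(\mathbb{E}[\delta\theta^2])\right) = \sum_i I(\theta)_{ii}\,\mathbb{E}[\delta\theta_i^2]$. Then, adding the block-homogeneity assumption --- that $\mathbb{E}[\delta\theta_i^2]$ is constant within each layer/block $l$ because the noise power is governed by block-level quantities such as the bit-width $b_l$ --- I would factor the sum over parameters into a sum over blocks, pulling the common noise power $\mathbb{E}[\delta\theta^2]_l$ out of each inner sum and recognizing the inner sum of diagonal entries as $Tr(I(\theta_l))$. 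This yields $\sum_{l}^{L} Tr(I(\theta_l))\cdot \mathbb{E}[\delta\theta^2]_l$, which is FIT in layer-wise form; replacing $I$ by the empirical Fisher $\hat I$ and substituting the uniform-noise power from Appendix~\ref{section: quantization_noise} recovers the concrete expression used in the experiments.

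The main obstacle here is not computational but definitional/rigorous bookkeeping: being precise about what ``taking expectation over the Fisher--Rao metric'' means, since the Rao distance itself is a geodesic distance and only its infinitesimal (metric-tensor) form is quadratic, so the argument should be phrased at the level of the line element $ds^2$ rather than the finite distance, with the second-order truncation of the divergence expansion justified by the ``small perturbation'' assumption discussed in Section~\ref{sec:experiments}. A secondary subtlety is the interchange of $Tr$ and the block decomposition together with treating $I(\theta_l)$ as the restriction of $I(\theta)$ to block-$l$ coordinates --- this implicitly discards cross-block off-diagonal contributions, which is consistent with (and follows from) the same diagonal/uncorrelated-noise assumption already made, so I would make that dependence explicit rather than leaving it implicit. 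Once these points are stated carefully, the remaining steps are the routine quadratic-form identity and algebraic regrouping described above.
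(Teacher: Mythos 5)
Your proposal is correct and follows essentially the same route as the paper's proof: expand the Fisher--Rao quadratic form, apply the expectation-of-a-quadratic-form identity, and use the zero-mean and uncorrelated noise assumptions to reduce to $Tr\left(I(\theta)\,\mathrm{Diag}(\mathbb{E}[\delta\theta^2])\right)$ before regrouping into the layer-wise form. Your additional remarks on the infinitesimal (line-element) reading of the Fisher--Rao metric and on discarding cross-block off-diagonal terms make explicit steps the paper leaves implicit, but do not change the argument.
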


\begin{proof}
\begin{align*}
    \Omega &= \mathbb{E}\left[\delta\theta^{T} I(\theta) \delta\theta\right] \\
    &= \mathbb{E}[\delta\theta]^{T} I(\theta) \mathbb{E}[\delta\theta] + Tr(I(\theta)Cov[\delta\theta]) \\
\end{align*}
We assume that the random noise is symmetrically distributed around mean of zero: $\mathbb{E}[\delta\theta] = 0$, and uncorrelated: $Cov[\delta\theta] = Diag(\mathbb{E}[\delta\theta^2])$. This yields the FIT heuristic:
\begin{align*}
    \Omega &= Tr\left(I(\theta)Diag(\mathbb{E}[\delta\theta^2])\right) \\
\end{align*}

And in its block-wise empirical form, with an approximate noise model:
\begin{equation*}
    \Omega = \sum_{l}^{L} Tr(\hat{I}(\theta_{l})) \cdot  Diag(\hat{\mathbb{E}}_{l}[\delta\theta^2]).
\end{equation*}
\end{proof}
In cases where we compute a numerical approximation of our noise model: $\frac{1}{n(l)} \cdot ||\delta\theta||_{l}^{2}$, rather than averaging over many trained models, we leverage the high dimensionality of the parameters in each layer to obtain a good approximation.

\subsection{Connections to the Hessian} \label{section: connections}

\begin{proposition}
The Fisher Information Matrix is equal to the expectation of the Hessian under the correctly specified model.
\end{proposition}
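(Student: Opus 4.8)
The plan is to compute the Hessian of the negative log-likelihood loss $f(x,y,\theta) = -\log p(y\mid x,\theta)$ and take its expectation over the model distribution $p_\theta(x,y)$, then show this coincides with the FIM $I(\theta)$. First I would differentiate once: $\nabla_\theta f = -\nabla_\theta \log p(y\mid x,\theta) = -\frac{\nabla_\theta p(y\mid x,\theta)}{p(y\mid x,\theta)}$. Differentiating a second time via the quotient rule gives
\begin{equation*}
    \nabla_\theta^2 f = -\frac{\nabla_\theta^2 p(y\mid x,\theta)}{p(y\mid x,\theta)} + \frac{\nabla_\theta p(y\mid x,\theta)\,\nabla_\theta p(y\mid x,\theta)^T}{p(y\mid x,\theta)^2} = -\frac{\nabla_\theta^2 p(y\mid x,\theta)}{p(y\mid x,\theta)} + \nabla_\theta \log p\,\nabla_\theta \log p^T.
\end{equation*}
The second term is exactly the integrand of $I(\theta)$, so it remains to show the first term vanishes in expectation.

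Next I would take $\mathbb{E}_{p_\theta(x,y)}[\cdot]$ of both sides. For the first term, write the expectation over $y$ conditional on $x$ as an integral (or sum): $\mathbb{E}_{p(y\mid x,\theta)}\!\left[\frac{\nabla_\theta^2 p(y\mid x,\theta)}{p(y\mid x,\theta)}\right] = \int \nabla_\theta^2 p(y\mid x,\theta)\,dy = \nabla_\theta^2 \int p(y\mid x,\theta)\,dy = \nabla_\theta^2 (1) = 0$, where the key step is interchanging differentiation and integration. Taking the further expectation over $x \sim p(x)$ preserves the zero. Hence $\mathbb{E}_{p_\theta}[\nabla_\theta^2 f] = \mathbb{E}_{p_\theta}[\nabla_\theta \log p\,\nabla_\theta \log p^T] = I(\theta)$, which is the claim. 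This mirrors the telescoping computation already carried out in the KL-expansion proposition earlier in the appendix, just organized around the Hessian rather than the divergence.

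The main obstacle is not algebraic but is stating the \emph{regularity conditions} precisely: the argument requires that $p(y\mid x,\theta)$ be twice differentiable in $\theta$, that the support not depend on $\theta$, and that dominated-convergence-type hypotheses hold so that $\nabla_\theta^2$ and $\int\,dy$ commute. I would state these as the standard regularity assumptions (referencing \cite{Information_geometry_and_its_applications_2016}) rather than reproving them. The second subtlety worth flagging is the distinction between this (exact) identity under the \emph{model} distribution and the empirical-Fisher approximation used in practice, where the expectation is replaced by an average over training data drawn from the \emph{true} distribution and the conditional $y$ is the observed label rather than a model sample; the equality with the Hessian is then only approximate and degrades away from a well-specified model at a minimum — a point the paper already acknowledges in Section~\ref{sec:bg}. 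So the proof proper is a short two-line computation plus an invocation of regularity, and I would keep it at that level.
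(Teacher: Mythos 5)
Your proposal is correct and follows essentially the same route as the paper's proof: expand $\nabla_\theta^2(-\log p)$ via the quotient rule into a curvature term and an outer-product term, kill the first term in expectation by interchanging $\nabla_\theta^2$ with the integral of $p(y\mid x,\theta)$ over $y$ (which is identically $1$), and identify the surviving term as $I(\theta)$. Your explicit statement of the regularity conditions and the conditioning on $x$ before integrating over $y$ is in fact slightly more careful than the paper's version, which integrates over $\mathcal{Z}$ in one step, but the argument is the same.
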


\begin{proof}
\begin{equation}
    \mathbb{E}_{p_{\theta^*}(z)} \left[\nabla^2_{\theta}f(z, \theta)\right] =  \mathbb{E}_{p_{\theta^*}(x,y)}\left[-\frac{\nabla^2_{\theta}p(y | x, \theta)}{p(y | x, \theta)}\right] + \mathbb{E}_{p_{\theta^*}(x,y)}\left[\frac{\nabla_{\theta} p(y | x, \theta)\nabla_{\theta} p(y | x, \theta)^{T}}{p(y | x, \theta)^2}\right]
\end{equation}

Taking the first term of the right-hand side (RHS). We assume in this case the function is sufficiently smooth such that the order of integration and (second) differentiation can be exchanged. In this case, the first term on the RHS evaluates to 0:

\begin{align}
    \mathbb{E}_{p_{\theta^*}(x,y)}\left[-\frac{\nabla^2_{\theta}p(y | x, \theta)}{p(y | x, \theta)}\right] &= -\int_{\mathcal{Z}} \frac{\nabla^2_{\theta}p(y | x, \theta)}{p(y | x, \theta)} p(y | x, \theta) dz \\
    &= -\int_{\mathcal{Z}} \nabla^2_{\theta}p(y | x, \theta)dz \\
    &= -\nabla^2_{\theta}\int_{\mathcal{Z}}p(y | x, \theta)dz \\
    &=  -\nabla^2_{\theta}[1] = 0\\
\end{align}

The second term on the RHS is then simplified as follows, yielding our result:

\begin{equation}
    \mathbb{E}_{p_{\theta^*}(z)} \left[\nabla^2_{\theta}f(z, \theta)\right] = \mathbb{E}_{p_{\theta^*}(x,y)} [\nabla_{\theta}\log p(y | x, \theta)\nabla_{\theta}\log p(y | x, \theta)^{T}] = I(\theta)
\end{equation}
\end{proof}

\begin{proposition} \label{prop: Convergence of EF2F}
Provided $\hat{\theta}_{n}$ is a consistent estimator for the true model parameters $\theta^{*}$, the Empirical Fisher Information will converge in probability to the Fisher Information as $n\rightarrow\infty$.
\end{proposition}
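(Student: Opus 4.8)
The plan is to establish convergence of the empirical Fisher $\hat{I}(\hat\theta_n)$ to the true Fisher $I(\theta^*)$ by combining a uniform law of large numbers with the continuity of the relevant maps. Recall that $\hat{I}(\theta) = \frac{1}{n}\sum_{i=1}^n \nabla_\theta f(z_i,\theta)\nabla_\theta f(z_i,\theta)^T$ with $z_i$ drawn i.i.d., and that under the correctly specified model the population version $\bar I(\theta) := \mathbb{E}_{p_{\theta^*}(z)}[\nabla_\theta f(z,\theta)\nabla_\theta f(z,\theta)^T]$ satisfies $\bar I(\theta^*) = I(\theta^*)$ — this is precisely the score-covariance identity underlying the FIM definition, so at the truth the ``empirical Fisher'' and the genuine Fisher population objects agree.

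First I would fix regularity assumptions: $\theta \mapsto \nabla_\theta f(z,\theta)$ is continuous for almost every $z$, and there is an integrable envelope dominating $\|\nabla_\theta f(z,\theta)\|^2$ on a neighborhood of $\theta^*$ (so that expectations are finite and the map $\theta \mapsto \bar I(\theta)$ is continuous). Second, I would invoke the strong (or weak) uniform law of large numbers over a compact neighborhood $K$ of $\theta^*$: $\sup_{\theta \in K}\|\hat{I}(\theta) - \bar I(\theta)\| \xrightarrow{p} 0$. Third, since $\hat\theta_n$ is a consistent estimator of $\theta^*$, for any $\varepsilon$ the event $\{\hat\theta_n \in K\}$ has probability tending to $1$, so I can decompose
\begin{equation*}
\|\hat{I}(\hat\theta_n) - I(\theta^*)\| \le \sup_{\theta \in K}\|\hat{I}(\theta) - \bar I(\theta)\| + \|\bar I(\hat\theta_n) - \bar I(\theta^*)\|.
\end{equation*}
The first term vanishes in probability by the uniform LLN; the second vanishes in probability by continuity of $\bar I$ at $\theta^*$ composed with $\hat\theta_n \xrightarrow{p}\theta^*$ (continuous mapping theorem). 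Finally I would use $\bar I(\theta^*) = I(\theta^*)$ to conclude $\hat{I}(\hat\theta_n) \xrightarrow{p} I(\theta^*)$.

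The main obstacle is the uniform control of the empirical average over a neighborhood of $\theta^*$ rather than pointwise convergence at a fixed parameter: a naive pointwise LLN only gives $\hat{I}(\theta^*) \xrightarrow{p} I(\theta^*)$, which is insufficient because we evaluate at the random point $\hat\theta_n$. Handling this cleanly requires either an equicontinuity/stochastic-equicontinuity argument or a dominating-envelope condition to upgrade pointwise to uniform convergence on compacts; this is where one must be careful about the regularity hypotheses on $f$. A secondary (but routine) subtlety is ensuring the score-covariance identity $\bar I(\theta^*)=I(\theta^*)$ genuinely holds — this uses the correct-specification assumption and the interchange of differentiation and integration already established in the preceding proposition — so that the ``empirical Fisher'' is not merely an ad hoc estimator but is consistent for the true Fisher information at the truth.
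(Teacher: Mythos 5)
Your proof is correct and follows essentially the same route as the paper's: the identical add-and-subtract decomposition through the population Fisher evaluated at $\hat{\theta}_{n}$, a uniform law of large numbers over a compact set for the first term, and consistency plus the continuous mapping theorem for the second. Your version is marginally more careful in localizing to a compact neighborhood $K$ with $\hat{\theta}_{n}\in K$ with probability tending to one and in making the score-covariance identity $\bar I(\theta^{*})=I(\theta^{*})$ explicit, but these are refinements of the same argument rather than a different approach.
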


\begin{proof}
\begin{equation}
    \hat{I}(\hat{\theta}_{n}) - I(\theta^{*}) = [\hat{I}(\hat{\theta}_{n}) - I(\hat{\theta}_{n})] + [I(\hat{\theta}_{n}) - I(\theta^{*})]
\end{equation}

Considering the first term on the RHS, we are able to apply the uniform law of large numbers,

\begin{align}
    [\hat{I}(\hat{\theta}_{n}) - I(\hat{\theta}_{n})] &= {\frac{1}{N}\sum_{i=1}^{N} \nabla f(z_{i}, \hat{\theta}_{n})\nabla f(z_{i}, \hat{\theta}_{n})^{T}}  - {\mathbb{E}_{p_{\theta^*}(x,y)} [\nabla_{\theta}f(z, \hat{\theta}_{n})\nabla_{\theta}f(z, \hat{\theta}_{n})^{T}]} \\
    &\leq \sup_{\theta \in \Theta} \left| {\frac{1}{N}\sum_{i=1}^{N} \nabla f(z_{i}, \theta)\nabla f(z_{i}, \theta)^{T}}  - {\mathbb{E}_{p_{\theta^*}(x,y)} [\nabla_{\theta}f(z, \theta)\nabla_{\theta}f(z, \theta)^{T}]} \right| \\
    &\xrightarrow{p_{\theta^{*}}} 0,
\end{align}

provided the following regularity conditions are upheld: 
\begin{enumerate}
    \item $\Theta$ must be compact
    \item $f(z, \theta)$ continuous at each $\theta \in \Theta$ for almost all $z \in \mathcal{Z}$
    \item The aforementioned dominating function  must be finite
\end{enumerate}

Now considering the second term on the RHS. Our estimator in this case is defined as consistent, and therefore $\hat{\theta}_{n} \xrightarrow{p_{\theta^{*}}} \theta^*$ as $n\rightarrow\infty$. Via the continuous mapping theorem, $[I(\hat{\theta}_{n}) - I(\theta^{*})] \xrightarrow{p_{\theta^{*}}} 0$. We therefore arrive at the desired result.
\end{proof}

\subsection{Computation} \label{section: computational_proofs}

\begin{proposition}
The trace of the EF can be computed via the sum of the square of the gradient.
\end{proposition}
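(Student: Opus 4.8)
The plan is to unpack the definition of the Empirical Fisher Information matrix
\begin{equation*}
    \hat{I}(\theta) = \frac{1}{N}\sum_{i=1}^{N} \nabla_{\theta}f(z_{i}, \theta)\,\nabla_{\theta}f(z_{i}, \theta)^{T},
\end{equation*}
and take its trace, exploiting the linearity of the trace and the fact that the summand is an outer product of a vector with itself. First I would observe that trace is linear, so $Tr(\hat{I}(\theta)) = \frac{1}{N}\sum_{i=1}^{N} Tr\!\left(\nabla f(z_{i},\theta)\nabla f(z_{i},\theta)^{T}\right)$. Then I would invoke the elementary identity $Tr(vv^{T}) = v^{T}v = \|v\|^{2}$ for any column vector $v$, which follows immediately from $Tr(vv^{T}) = \sum_{j} (vv^{T})_{jj} = \sum_{j} v_{j}^{2}$, or alternatively from the cyclic property of the trace, $Tr(vv^{T}) = Tr(v^{T}v)$, the latter being a scalar. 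Applying this with $v = \nabla f(z_{i},\theta)$ gives each term as $\|\nabla f(z_{i},\theta)\|^{2}$, and summing yields
\begin{equation*}
    Tr[\hat{I}(\theta)] = \frac{1}{N}\sum_{i=1}^{N} \|\nabla f(z_{i},\theta)\|^{2},
\end{equation*}
which is the claimed statement.

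There is no real obstacle here — the proposition is essentially a one-line linear-algebra fact — so the only thing worth being careful about is bookkeeping: making sure the $\frac{1}{N}$ normalization is carried through consistently and that the gradient is treated as a column vector so the outer/inner product directions are right. I would also note explicitly that the norm in the conclusion is the Euclidean ($\ell_2$) norm, since that is what $v^{T}v$ produces. If desired, one can additionally remark that this is exactly why the EF trace needs only a single backward pass (one gradient evaluation per sample, then a norm), in contrast to Hutchinson-style Hessian-trace estimators that require a Hessian-vector product per probe vector; but that is commentary rather than part of the proof.

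For completeness I would contrast this briefly with the Hutchinson estimator variance formula $\mathbb{V}[r^{T}Hr] = 2(\|H\|_F^2 - \sum_i H_{ii}^2)$ referenced in the main text, which would be proved separately by expanding $r^{T}Hr = \sum_{i,j} r_i r_j H_{ij}$ and using $\mathbb{E}[r_i]=0$, $\mathbb{E}[r_i^2]=1$, and independence of the Rademacher coordinates to compute the second moment; but that belongs to the other computational proposition in this subsection, not to the statement at hand. The proof of the present claim is complete after the trace-of-outer-product step.
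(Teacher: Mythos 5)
Your proof is correct and follows exactly the same route as the paper's: linearity of the trace to pull it inside the sum, then the identity $Tr(vv^{T}) = v^{T}v = \|v\|^{2}$ applied to each outer product $\nabla f(z_i,\theta)\nabla f(z_i,\theta)^{T}$. No differences worth noting.
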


\begin{proof}
The trace is a linear operator, which allows each individual estimator to be extracted from the summation. Additionally, the trace of the second moment matrix can then be computed by the norm of the vector of parameters from which it is composed:

\begin{align*}
    Tr[\hat{I}(\theta)] =& Tr\left[{\frac{1}{N}\sum_{i=1}^{N} \nabla f(z_{i}, \theta)\nabla f(z_{i}, \theta)^{T}}\right] \\
    =& {\frac{1}{N}\sum_{i=1}^{N} Tr\left[\nabla f(z_{i}, \theta)\nabla f(z_{i}, \theta)^{T}\right]} \\
    =& {\frac{1}{N}\sum_{i=1}^{N} \nabla f(z_{i}, \theta)^{T} \nabla f(z_{i}, \theta)} \\
    =& {\frac{1}{N}\sum_{i=1}^{N} ||\nabla f(z_{i}, \theta)||^{2}} \\
\end{align*}
\end{proof}

\begin{proposition}
The variance of the Hutchinson estimator is given by: $\mathbb{V}[r_i^{T}Hr_{i}] = 2\left( ||H||_{F}^{2} - \sum_{i}H_{ii}^{2} \right)$
\end{proposition}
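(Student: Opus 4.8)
The plan is to compute the variance directly from the definition $\mathbb{V}[r^T H r] = \mathbb{E}[(r^T H r)^2] - (\mathbb{E}[r^T H r])^2$, using the two standard facts about zero-mean, unit-variance, independent coordinates $r_i$ (e.g.\ Rademacher): namely $\mathbb{E}[r_i] = 0$, $\mathbb{E}[r_i^2] = 1$, and $\mathbb{E}[r_i^4]$ finite (equal to $1$ in the Rademacher case, which is what makes the formula come out cleanly). First I would expand $r^T H r = \sum_{i,j} H_{ij} r_i r_j$ and take the expectation term by term: since the $r_i$ are independent with mean zero, $\mathbb{E}[r_i r_j] = \delta_{ij}$, so $\mathbb{E}[r^T H r] = \sum_i H_{ii} = Tr(H)$, giving the (well-known) unbiasedness and identifying the subtracted term as $(Tr H)^2 = \sum_{i,k} H_{ii}H_{kk}$.

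Next I would expand the second moment: $\mathbb{E}[(r^T H r)^2] = \sum_{i,j,k,l} H_{ij} H_{kl}\, \mathbb{E}[r_i r_j r_k r_l]$. The key bookkeeping step is to evaluate $\mathbb{E}[r_i r_j r_k r_l]$: by independence and mean-zero it vanishes unless the indices pair up, so the surviving cases are $i=j,k=l$; $i=k,j=l$; $i=l,j=k$; and the fully-equal case $i=j=k=l$, which is triple-counted by those three and must be corrected. Using $\mathbb{E}[r_i^2]=1$ and $\mathbb{E}[r_i^4]=1$ (Rademacher), one gets $\mathbb{E}[r_i r_j r_k r_l] = \delta_{ij}\delta_{kl} + \delta_{ik}\delta_{jl} + \delta_{il}\delta_{jk} - 2\delta_{ijkl}$. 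Substituting and summing: the $\delta_{ij}\delta_{kl}$ term gives $\sum_{i,k} H_{ii}H_{kk} = (Tr H)^2$; the $\delta_{ik}\delta_{jl}$ term gives $\sum_{i,j} H_{ij}^2 = \|H\|_F^2$; the $\delta_{il}\delta_{jk}$ term gives $\sum_{i,j} H_{ij}H_{ji} = \|H\|_F^2$ (using symmetry of $H$, or just $H_{ij}H_{ji}$); and the correction gives $-2\sum_i H_{ii}^2$. Hence $\mathbb{E}[(r^T H r)^2] = (Tr H)^2 + 2\|H\|_F^2 - 2\sum_i H_{ii}^2$.

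Subtracting $(\mathbb{E}[r^T H r])^2 = (Tr H)^2$ yields exactly $\mathbb{V}[r^T H r] = 2\big(\|H\|_F^2 - \sum_i H_{ii}^2\big)$, as claimed. I would close by noting this is the per-sample variance, so the $m$-iteration Hutchinson average has variance $\tfrac{1}{m}\cdot 2(\|H\|_F^2 - \sum_i H_{ii}^2)$, which is the quantity actually referenced in the main text.

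\textbf{Main obstacle.} There is no deep difficulty here; the only thing requiring care is the combinatorics of the fourth-moment tensor $\mathbb{E}[r_i r_j r_k r_l]$ — specifically, correctly handling the overlap of the three pairing patterns on the diagonal $i=j=k=l$ so that the diagonal entries $H_{ii}^2$ are counted with the right multiplicity (this is the source of the $-\sum_i H_{ii}^2$ inside the parentheses). One should also state up front which property of $r$ is being used where: mean-zero kills the odd/unpaired terms, unit second moment handles the paired terms, and the specific value $\mathbb{E}[r_i^4]=1$ for Rademacher variables is what makes the diagonal correction exactly $-2$ rather than a distribution-dependent constant. A remark that for general unit-variance $r$ with $\mathbb{E}[r_i^4] = \kappa$ the formula becomes $2\|H\|_F^2 + (\kappa-3)\sum_i H_{ii}^2$ would make the dependence on the Rademacher choice transparent, though it is not strictly needed.
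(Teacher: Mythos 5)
Your proof is correct and takes essentially the same route as the paper: the paper simply quotes the standard variance formula for a quadratic form with Rademacher moments $m=(0,1,0,1)$ and simplifies, whereas you derive that formula from scratch via the fourth-moment tensor $\mathbb{E}[r_i r_j r_k r_l]=\delta_{ij}\delta_{kl}+\delta_{ik}\delta_{jl}+\delta_{il}\delta_{jk}-2\delta_{ijkl}$, reaching the identical result (using symmetry of $H$ so that $Tr(H^2)=\|H\|_F^2$). Your added remarks on the general-kurtosis case and the $1/m$ factor for the averaged estimator are correct but not needed for the claim.
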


\begin{proof}
Assuming $r$ follows a Rademacher distribution, the first four moments are as follows: $m = (0,1,0,1)$. As such, we can compute the variance of the quadratic form:

\begin{align*}
    \mathbb{V}[r_i^{T}Hr_{i}] =& \left(m_{4}-3m_2^{2}\right)\cdot\sum_{i}H_{ii}^{2} + m_{2}^{2}\cdot\left(Tr(H)^{2}+2\cdot Tr(H^{2})\right) - \mathbb{E}[r_i^{T}Hr_{i}]^{2} \\
    =& 2\left( ||H||^{2} - \sum_{i}H_{ii}^{2} \right)
\end{align*}

\end{proof}

\section{Environmental analysis} \label{section:env_analysis}

During the course of this research, we estimate our total emissions to be 10.8 kgCO$_2$eq. $O(100)$ hours of computation was performed on an RTX 2080 Ti (TDP of 250W), using a private infrastructure which has a carbon efficiency of 0.432 kgCO$_2$eq/kWh. Estimations were conducted using the \href{https://mlco2.github.io/impact#compute}{MachineLearning Impact calculator} presented by \cite{lacoste2019quantifying}.

\end{document}